\PassOptionsToPackage{pagebackref,breaklinks,colorlinks,citecolor=eccvblue}{hyperref}
\PassOptionsToPackage{pagebackref,breaklinks,colorlinks,citecolor=eccvblue}{hyperref}
\documentclass[runningheads]{llncs}

\usepackage{eccv}

\usepackage{amsmath,amsfonts,bm}

\def\eqref#1{equation~\ref{#1}}
\def\1{\bm{1}}

\def\rvx{{\mathbf{x}}}

\DeclareMathAlphabet{\mathsfit}{\encodingdefault}{\sfdefault}{m}{sl}
\SetMathAlphabet{\mathsfit}{bold}{\encodingdefault}{\sfdefault}{bx}{n}

\newcommand{\cmark}{\ding{51}} % checkmark
\definecolor{bestred}{RGB}{220, 235, 255}   % stronger light blue
\definecolor{secondred}{RGB}{245, 250, 255} % very light blue

\newcommand{\modelname}{\textsc{EFlow}\xspace}  % adjust name + spacing

\usepackage{eccvabbrv}
\usepackage[utf8]{inputenc} % allow utf-8 input
\usepackage[T1]{fontenc}    % use 8-bit T1 fonts
\usepackage{hyperref}       % hyperlinks
\usepackage{url}            % simple URL typesetting
\usepackage{booktabs}       % professional-quality tables
\usepackage{amsfonts}       % blackboard math symbols
\usepackage{nicefrac}       % compact symbols for 1/2, etc.
\usepackage{microtype}      % microtypography
\usepackage{xcolor}         % colors
\usepackage[table]{xcolor}
\usepackage{multirow}
\usepackage{graphics}
\usepackage{graphicx}
\usepackage{subcaption}
\usepackage{pifont}
\usepackage{enumitem}
\usepackage{arydshln}
\usepackage[symbol]{footmisc}

\usepackage{algorithm}
\usepackage{algorithmicx}
\usepackage{algpseudocode}
\usepackage{amssymb}
\usepackage{amsmath}
\usepackage{wrapfig}
\usepackage{makecell}
\usepackage{siunitx}
\usepackage{etoc}

\usepackage[accsupp]{axessibility}  % Improves PDF readability for those with disabilities.

\usepackage{hyperref}
\usepackage{orcidlink}

\begin{document}

% ---------------------------------------------------------------
% TODO REVIEW: Replace with your title
% \title{ESF: Fast Training Few-Step Video Generator \\ from Scratch via Efficient Solution Flow} 
% \title{\modelname: Fast Training Few-Step Video Generator \\ from Scratch via Efficient Solution Flow} 
\title{\modelname: Fast Few-Step Video Generator Training \\ from Scratch via \underline{E}fficient Solution \underline{F}low} 

% TODO REVIEW: If the paper title is too long for the running head, you can set
% an abbreviated paper title here. If not, comment out.
\titlerunning{EFlow: Fast Few-Step Video Generator Training from Scratch}

% TODO FINAL: Replace with your author list. 
% Include the authors' OCRID for the camera-ready version, if at all possible.
\author{Dogyun Park\inst{1,2}\thanks{Work done during internship at Snap Inc.} \and
Yanyu Li\inst{1} \and
Sergey Tulyakov \inst{1} \and
Anil Kag \inst{1}
}

% TODO FINAL: Replace with an abbreviated list of authors.
\authorrunning{D. Park et al.}
% First names are abbreviated in the running head.
% If there are more than two authors, 'et al.' is used.

% TODO FINAL: Replace with your institution list.
\institute{Snap Inc. \and
Korea University, Seoul, Republic of Korea}
\maketitle

\begin{center}
  \includegraphics[width=\linewidth]{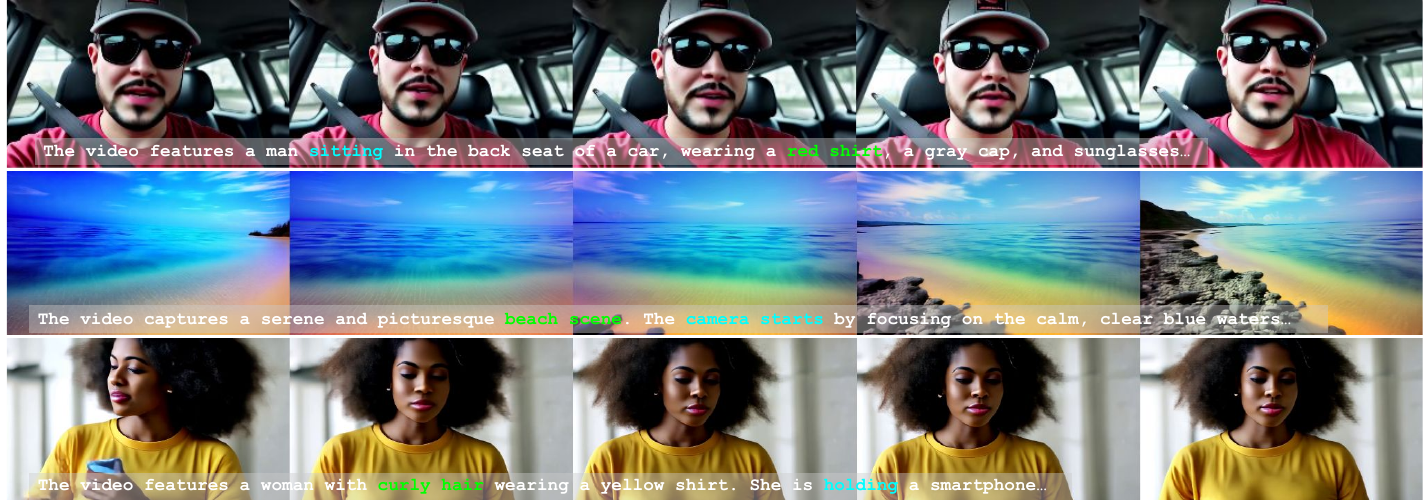}
  \\
  \vspace{-0.7em}
  \captionof{figure}{\textbf{Text-to-video results generated by our \modelname} with 4 inference steps.}
\end{center}

\begin{abstract}
  Scaling video diffusion transformers is fundamentally bottlenecked by two compounding costs: the expensive quadratic complexity of attention per step, and the iterative sampling steps. In this work, we propose \modelname, a efficient few-step training framework, that tackles these bottlenecks simultaneously. To reduce sampling steps, we build on a solution-flow objective that learns a function mapping a noised state at time $t$ to time $s$. Making this formulation computationally feasible and high-quality at video scale, however, demands two complementary innovations. First, we propose \emph{Gated Local–Global Attention}, a token-droppable hybrid block which is efficient, expressive, and remains highly stable under aggressive random token-dropping, substantially reducing per-step compute. Second, we develop an efficient few-step training recipe. We propose \emph{Path-Drop Guided} training to replace the expensive guidance target with a computationally cheap, weak path. Furthermore, we augment this with a \emph{Mean-Velocity Additivity} regularizer to ensure high fidelity at extremely low step counts. Together, our \modelname enables a practical from-scratch training pipeline, achieving up to $2.5\times$ higher training throughput over standard solution-flow, and $45.3\times$ lower inference latency than standard iterative models with competitive performance on Kinetics and large-scale text-to-video datasets.
  \keywords{Video diffusion \and Efficient architecture \and Few-step generation}
\end{abstract}

\section{Introduction}

% Diffusion and flow-based models~\cite{wan2025wan,OpenSora,CogVideoX} enable high-fidelity video generation, but scaling them to long, high-resolution clips remains prohibitively expensive. The end-to-end cost of video diffusion transformers (VideoDiTs) suffers from a compounding double bottleneck: (i) heavy per-step compute, as spatiotemporal self-attention scales quadratically with token count, and (ii) iterative sampling, requiring tens of denoising steps that multiply this per-step cost. Together, these bottlenecks severely restrict practical training and deployment.
Diffusion and flow-based models~\cite{wan2025wan,OpenSora,CogVideoX,sora,kling} enable high-fidelity video generation, yet scaling them to long, high-resolution clips remains computationally daunting. Video diffusion transformers (VideoDiTs) suffers from a compounding double bottleneck: (i) heavy per-step compute, as spatiotemporal self-attention scales quadratically with token count, and (ii) iterative sampling, requiring tens of denoising steps that multiply this per-step cost. Together, these constraints severely limit the practicality of training and deploying video models.

Most existing research attacks these challenges in isolation. Architectural efforts reduce per-step complexity via efficient linear attention~\cite{xie2024sana,chen2025sana,zhu2025dig,zhao2026s2ditsandwichdiffusiontransformer}, sparsity~\cite{zhang2025sla,zhang2025vsa,zhang2025fast,xi2025sparse}, or token pruning~\cite{li2024vidtome,choi2024vid,park2025sprint}. However, aggressive token removal often harms stability or expressivity—especially in the presence of locality mechanisms (like convolutions) that implicitly assume dense, grid-like spatial neighborhoods. Conversely, sampling-focused approaches reduce the number of steps through distillation~\cite{dmd,dmd2,zhang2024sf,park2025blockwise} or consistency-style objectives~\cite{song2023consistency,wang2024phased,zheng2025large,geng2025mean,luo2025soflow,park2024constant}, but frequently introduce substantial engineering complexity (\eg, pre-trained teachers, multi-stage pipelines, or adversarial components). Furthermore, Classifier-Free Guidance (CFG)—a key ingredient for controllable generation—can effectively double training compute by requiring both conditional and unconditional network evaluations per step.

\begin{figure}[t]
  \centering
  \begin{minipage}[t]{0.47\linewidth}
    \centering
    \includegraphics[width=\linewidth]{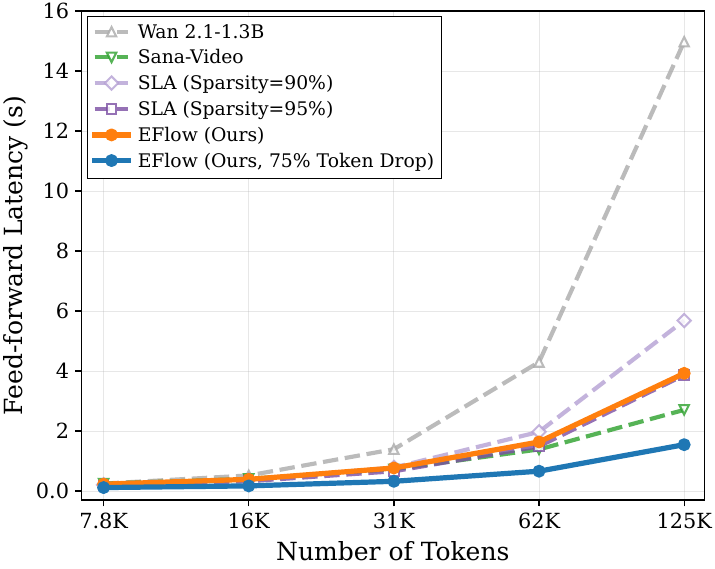}
    \vspace{-0.3em}
    {\small (a) Latency vs. tokens}
  \end{minipage}\hfill
  \begin{minipage}[t]{0.51\linewidth}
    \centering
    \includegraphics[width=\linewidth]{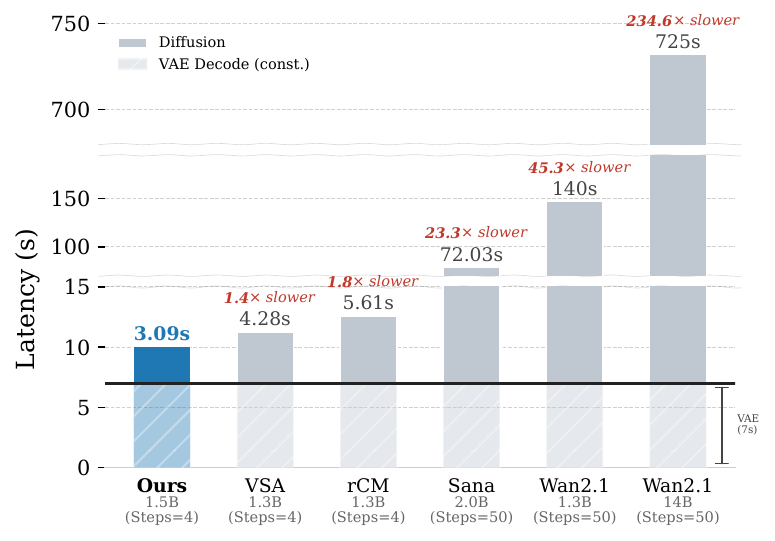}
    \vspace{-0.3em}
    {\small (b) 480p inference latency}
  \end{minipage}
  \caption{\textbf{(a) Feed-forward latency vs.\ token number} for different attention designs. Softmax attention (Wan 2.1) scales poorly as the sequence length grows while our GLGA (\modelname) is significantly faster than softmax and remains competitive with sparse/linear baselines. Applying 75\% token dropping on GLGA yields the lowest latency and the most favorable scaling. \textbf{(b) 480p inference latency} for our model and recent video generators. Our model achieves 1.4$\times$ and 45.3$\times$ faster diffusion inference over baselines by combining an efficient backbone with few-step sampling.}
  \label{fig:efficiency_comparison}
  \vspace{-1em}
\end{figure}

\begin{table}[t]
\centering
\caption{\textbf{Comparison of our method with VideoDiT families.} We summarize attention complexity, training pipeline requirements (reliance on teacher or auxiliary models like discriminator), and inference cost via number of function evaluations (NFEs). $N$, $N'$ $D$, and $W$ denote token count, sparse token number, dimension, and local window size, respectively. The ``$\times 2$'' indicates model evaluations for classifier-free guidance.}
\setlength{\tabcolsep}{7pt}
\renewcommand{\arraystretch}{1.18}
\small
\resizebox{\linewidth}{!}{%
\begin{tabular}{lccccc}
\toprule
& \makecell[c]{\textbf{Standard}\\[-1pt]{\footnotesize (Wan 2.1, Wan2.2)}}
& \makecell[c]{\textbf{Efficient}\\[-1pt]{\footnotesize (Sana, S$^2$DiT)}}
& \makecell[c]{\textbf{Efficient}\\[-1pt]{\footnotesize (VSA, SLA)}}
& \makecell[c]{\textbf{Few-step}\\[-1pt]{\footnotesize (DMD, rCM)}}
& \makecell[c]{\textbf{Ours}\\[-1pt]{\footnotesize (\modelname)}} \\
\midrule

\rowcolor{gray!15}
\multicolumn{6}{l}{\textit{Attention architecture}} \\
\textbf{Type}
& Softmax
& Linear+Conv
& Sparse softmax
& Softmax
& GLGA \\

\textbf{Complexity}
& \textcolor{Red}{$\mathcal{O}(N^2)$}
& \textcolor{ForestGreen}{$\mathcal{O}(ND)$}
& \textcolor{ForestGreen}{$\mathcal{O}(N'^2)$}
& \textcolor{Red}{$\mathcal{O}(N^2)$}
& \textcolor{ForestGreen}{$\mathcal{O}(N(D+W))$} \\

\textbf{Token-Droppable}
& \textcolor{ForestGreen}{\ding{51}}
& \textcolor{Red}{\ding{55}}
& \textcolor{Red}{\ding{55}}
& \textcolor{ForestGreen}{\ding{51}}
& \textcolor{ForestGreen}{\ding{51}} \\

\rowcolor{gray!15}
\multicolumn{6}{l}{\textit{Training pipeline}} \\
\textbf{No Teacher Model}
& \textcolor{ForestGreen}{\ding{51}}
& \textcolor{ForestGreen}{\ding{51}}
& \textcolor{Red}{\ding{55}}
& \textcolor{Red}{\ding{55}}
& \textcolor{ForestGreen}{\ding{51}} \\

\textbf{No Auxiliary Model}
& \textcolor{ForestGreen}{\ding{51}}
& \textcolor{ForestGreen}{\ding{51}}
& \textcolor{ForestGreen}{\ding{51}}
& \textcolor{Red}{\ding{55}}
& \textcolor{ForestGreen}{\ding{51}} \\

\rowcolor{gray!15}
\multicolumn{6}{l}{\textit{Inference}} \\
\textbf{Inference NFEs}
& \textcolor{Red}{$50 \times 2$}
& \textcolor{Red}{$50 \times 2$}
& \textcolor{Red}{$50 \times 2$}
& \textcolor{ForestGreen}{$4$}
& \textcolor{ForestGreen}{\textbf{$4$}} \\
\bottomrule
\end{tabular}%
}
\label{tab:paradigm_comparison}
\vspace{-1em}
\end{table}

In this work, we propose \modelname, \textbf{fast few-step VideoDiT training} (see Tab.~\ref{tab:paradigm_comparison}), by combining an efficient, token-droppable attention backbone with a scalable training recipe for solution-flow (SoFlow)~\cite{luo2025soflow}. Concretely, we build on the recent SoFlow~\cite{luo2025soflow} formulation, which learns a bi-time function mapping a noised state at time $t$ to state at any time $s$. 
% We select this framework because it offers a highly attractive paradigm for few-step generation: it enables single-stage training from scratch, completely eliminating the restrictive need for pre-trained teachers~\cite{zheng2025large} or auxiliary networks~\cite{yin2024one}. 
We select this framework for its unique advantages in few-step generation: it enables single-stage training from scratch, completely eliminating the need for pre-trained teachers~\cite{zheng2025large} or auxiliary networks~\cite{yin2024one}. 
Furthermore, its hybrid flow-matching and solution-consistency objective avoids the memory-intensive Jacobian-Vector Products (JVPs) required by consistency-style models~\cite{lu2024simplifying,geng2025mean}, permitting the use of highly optimized kernels like FlashAttention~\cite{dao2022flashattention,dao2023flashattention}. 
% allowing us to fully leverage highly optimized attention backends (\eg, FlashAttention~\cite{dao2022flashattention,dao2023flashattention}). 
%However, scaling this elegant formulation to video presents a severe computational obstacle: enforcing solution consistency across time pairs and applying classifier-free guidance requires multiple full-network evaluations per training step. When multiplied by the heavy quadratic cost of attention, this multi-pass requirement makes video-scale training prohibitively expensive.
However, scaling this formulation to video poses a severe computational challenge. Enforcing solution consistency across time pairs and applying CFG requires multiple full-network evaluations per training step. When compounded by the attention quadratic complexity, this multi-pass requirement makes video-scale training prohibitively expensive.

We introduce two synergistic innovations for scalable few-step video training:

\noindent\emph{(i) Efficient, Token-Droppable Hybrid Attention for VideoDiT.}
Prior linear-attention DiTs~\cite{chen2025sana,zhao2026s2ditsandwichdiffusiontransformer} use convolutional mixing for locality, which breaks under aggressive token dropping due to its reliance on a dense spatial grid. We instead propose \emph{Gated Local–Global Attention} (GLGA), a hybrid block fusing global linear attention with local sliding-window attention. Because window attention operates over available tokens rather than fixed grids, it is inherently robust to severe token removal. This adaptive routing yields an expressive and highly stable VideoDiT even when dropping the vast majority of tokens. Empirically, on the Kinetics dataset, this hybrid design achieves a \textbf{13\% improvement} in Fréchet Video Distance (FVD) compared to linear-attention~\cite{chen2025sana} (Tab.~\ref{tab:fvd_comparison}).

\noindent\emph{(ii) Efficient Few-Step Training Recipe.}
Integrating CFG into solution-flow requires two network evaluations per sample—a steep computational cost for video. We propose \emph{Path-Drop Guided} (PDG) training, which replaces the expensive unconditional branch with a computationally cheap ``weak-path'' forward pass obtained by skipping middle transformer layers. Furthermore, to preserve quality at few-step sampling, we introduce a global \emph{Mean-Velocity Additivity} (MVA) regularizer that mitigates the integration errors in local consistency matching. 

%token-dropping and PDG 
Integrating the \emph{GLGA VideoDiT} with \emph{Scalable Few-Step Recipe}, \modelname compounds compute savings to make from-scratch, few-step video training practical. Our framework achieves \textbf{1.76$\times$ and 2.5$\times$ higher training throughput} on 480p video compared to flow matching and solution-flow matching, respectively (Fig.~\ref{fig:efficiency}(a)). These gains extend to deployment, where \modelname achieves \textbf{1.4$\times$ to 45.3$\times$ lower inference latency} (Fig.~\ref{fig:efficiency_comparison}) than recent few-step methods (VSA, rCM), efficient architectures (Sana), and standard models (Wan2.1 1.3B), while maintaining competitive VBench scores on 480p text-to-video generation.

% \noindent\textbf{(ii) Efficient Few-Step Training Recipe.}
% Integrating CFG into solution-flow typically doubles the computational cost for video. We propose \textbf{Path-Drop Guided (PDG)} training, which replaces the expensive unconditional branch with a computationally cheap ``weak-path'' forward pass obtained by skipping middle transformer layers. Furthermore, to preserve quality at low step counts, we introduce a global \textbf{Mean-Velocity Additivity (MVA)} regularizer that mitigates the integration errors inherent in local consistency matching. Together, token-dropping and PDG compound compute savings, making from-scratch, few-step video training practical. Our framework achieves \textbf{1.76$\times$ and 2.5$\times$ higher training throughput} on 480p video compared to standard flow matching and solution-flow matching, respectively (\cref{fig:efficiency}).

% Ultimately, our end-to-end framework delivers profound efficiency gains at deployment. We achieve \textbf{1.4$\times$ to 45.3$\times$ lower inference latency} (\cref{fig:efficiency_comparison}) than recent few-step distillation methods (VSA, rCM), efficient architectures (Sana), and standard models (Wan 2.1), while maintaining competitive VBench scores on 480p text-to-video generation.

In summary, our key contributions are:
\begin{itemize}
\item \textbf{Efficient, Droppable VideoDiT Architecture:} We propose Gated Local–Global Attention (GLGA), which maintains high fidelity under aggressive token dropping, resulting in substantial training and inference speedups.
\item \textbf{Path-Drop Guided (PDG) Training:} We introduce a novel guidance mechanism that constructs guidance targets using a lightweight, block-skipped forward pass, drastically reducing the CFG compute overhead.
%compute overhead of classifier-free guidance.
\item \textbf{Large-Scale Few-Step Video Training:} We present a simple, teacher-free pipeline that incorporates a novel global long-jump consistency objective (MVA) to enable high-quality generation at extremely low step counts, achieving performance competitive with recent state-of-the-art open-source models.
\end{itemize}

\section{Related Work}
\noindent{\textbf{Efficient architectures and few-step generation.}}
Diffusion Transformers (DiTs)~\cite{peebles2023scalable} have demonstrated that transformer backbones are highly competitive, yet their computational cost scales quadratically with token count. To mitigate this per-step overhead, prior work either accelerates exact attention (\eg, FlashAttention~\cite{dao2022flashattention,dao2023flashattention}) or replaces softmax attention with sub-quadratic variants like linear or gated-linear attention (\eg, Sana, DiG, Sana-Video, and S$^2$DiT)~\cite{xie2024sana,zhu2025dig,chen2025sana,zhao2026s2ditsandwichdiffusiontransformer}. While sparse attention methods~\cite{zhang2025fast,zhang2025sla,Xia_2025_ICCV,xi2025sparse} further reduce complexity, they often follow a ``dense-to-sparse'' paradigm, requiring a pre-trained dense model as a high-fidelity starting point. This reliance on pre-existing weights poses a significant barrier to truly efficient, from-scratch video training.

Orthogonally, few-step generation methods address the iterative sampling bottleneck by drastically reducing the number of function evaluations (NFEs). A prominent line of work achieves this by distilling pre-trained teachers via Consistency Models~\cite{song2023consistency,lu2024simplifying,zheng2025large}, Distribution Matching Distillation (DMD)~\cite{yin2024one}, or Score Identity Distillation (SiD)~\cite{zhou2024score}. While these methods enable one- or few-step generation, they introduce substantial engineering complexity---via multi-stage pipelines and a strict reliance on expensive pre-trained teachers. To overcome these constraints, recent advancements such as MeanFlow~\cite{geng2025mean} and SoFlow~\cite{luo2025soflow} learn few-step generative dynamics directly from scratch. SoFlow, in particular, parameterizes a bi-time solution function trained through a combination of flow-matching and solution-consistency objectives. Our work bridges these efficient architecture and teacher-free few-step generation tracks: we adopt SoFlow to eliminate distillation overhead and introduce a novel token-droppable VideoDiT backbone alongside path-drop guided (PDG) training to make this paradigm highly scalable at video resolutions.

\noindent{\textbf{Token dropping for diffusion transformers.}}
Reducing sequence length is the most direct path to lowering attention costs. Token merging (\eg, ToMe~\cite{bolya2022token}) has been widely adapted for diffusion inference to preserve visual quality while reducing FLOPs. Recent extensions to video (\eg, VidToMe~\cite{li2024vidtome} and vid-TLDR~\cite{choi2024vid}) merge temporally or spatially redundant tokens to ensure multi-frame consistency without heavy compute. Alternatively, importance-aware pruning (\eg, ToME-SD~\cite{bolya2023token}) selectively discards less influential tokens based on attention scores or classifier-free guidance magnitudes. While inference-time reduction is valuable, random token dropping during training is essential to lower end-to-end training costs. However, naive dropping destroys grid locality and induces a severe train--inference mismatch. Existing attempts to mitigate this mismatch rely on masked autoencoding auxiliary tasks (\eg, MaskDiT~\cite{zheng2023fast}, MDTv2~\cite{gao2023mdtv2}) or sparse--dense residual fusion (\eg, SPRINT~\cite{park2025sprint}). 

Crucially, while standard DiTs can tolerate token dropping, their quadratic complexity remains a concern. Conversely, recent efficient architectures replace softmax with linear attention but often rely on convolutions to recover local expressivity~\cite{xie2024sana,chen2025sana,zhu2025dig}. Because convolutions assume a dense, regular spatial grid, these models fundamentally degrade under aggressive random token removal. To resolve this conflict, our Gated Local--Global Attention (GLGA) replaces rigid convolutions with local sliding-window attention. Since window attention operates over available tokens rather than fixed grids, it preserves linear efficiency and local expressivity while remaining inherently robust to token dropping.

\section{Preliminary}

\subsection{Solution Flow Matching}
Flow models~\cite{ho2020denoising,song2020denoising,park2024ddmi,lipmanflow,liu2022flow} specify a transport between
$\pi_0$ and a reference $\pi_1$ using a velocity ODE, $\frac{\mathrm{d}x_t}{\mathrm{d}t} = v(x_t, t,c),$ where $t\in[0,1]$ and $c$ is condition such as text. Following~\cite{ma2024sit}, we use the linear Gaussian coupling
$x_t \sim \mathcal{N}(\alpha_t x_0,\sigma_t^2 I)$ with $\alpha_t=1-t$ and $\sigma_t=t$.
% A network $v_\theta$ is trained by flow matching, \ie, regressing to the ground-truth conditional velocity induced by the coupling, \ie, $v(x_t,t,c|x_0)=x_1-x_0$:
A network $v_\theta$ is trained by flow matching to regress the induced ground-truth conditional velocity, \ie, $v(x_t,t,c|x_0)=x_1-x_0$:
\begin{equation}
    \min_\theta \;\mathbb{E}_{x_0, x_1, t}\!\left[\left\|v_\theta(x_t,t,c) - v(x_t,t,c\,|\,x_0)\right\|_2^2\right].
    \label{eq:loss}
\end{equation}
% SoFlow~\cite{luo2025soflow} generalizes velocity learning by directly modeling the \emph{solution map} $f(\rvx_t,t,s,c)$ for any $0\le s\le t\le 1$, which returns the ODE solution at time $s$ when initialized from state
SoFlow~\cite{luo2025soflow} generalizes this by directly modeling the \emph{solution map} $f(\rvx_t,t,s,c)$ for any $0\le s\le t\le 1$, which returns the ODE solution at time $s$ given the state
$\rvx_t$ at time $t$ under conditioning $c$.
% Under standard regularity assumptions, the ground-truth solution satisfies the boundary condition
Under regularity assumptions, the ground-truth solution satisfies the boundary condition
$f(\rvx_t,t,t,c)=\rvx_t$ and the consistency relation $\partial_s f(\rvx_t,t,s,c)=v(f(\rvx_t,t,s,c),s,c)$.
SoFlow shows it suffices to learn a parameterized $f_\theta$ that satisfies the boundary condition and an associated PDE.
% To enforce the boundary condition by construction, we use an Euler-style parameterization
To enforce the boundary condition, we use an Euler-style parameterization
\begin{equation}
    f_\theta(\rvx_t,t,s,c) = \rvx_t + (s-t)\,F_\theta(\rvx_t,t,s,c),
\label{eq:parameterization}
\end{equation}
where $F_\theta$ is implemented by a DiT backbone. 

\noindent{\textbf{{Flow-matching objective.}}}
% Setting $s=t$ in the SoFlow formulation yields $v(\rvx_t,t,c)=\partial_s f_\theta(\rvx_t,t,t,c)$.
%Under the Euler parameterization, $\partial_s f_\theta(\rvx_t,t,t,c)=F_\theta(\rvx_t,t,t,c)$, giving the FM loss
At boundary $s=t$, we obtain velocity $v(\rvx_t,t,c)=\partial_s f_\theta(\rvx_t,t,t,c)$.
From Eq.~\ref{eq:parameterization}, $\partial_s f_\theta(\rvx_t,t,t,c)=F_\theta(\rvx_t,t,t,c)$, giving the FM loss
\begin{equation}
\mathcal{L}_{\text{FM}}
= \mathbb{E}_{\rvx_0,\rvx_1,t}
\left[
    w_{\text{FM}}(t)\,
    \left\|
        F_\theta(\rvx_t,t,t,c) - v(\rvx_t, t, c\,|\,x_0)
    \right\|_2^2
\right],
\label{eq:soflow_fm}
\end{equation}
where $w_{\text{FM}}(t) = 1/(\text{MSE}+\epsilon)^p$; MSE indicates original mean squared error, $\epsilon$ is a factor for numerical stability, and $p$ is a factor that determines the robustness. 

\noindent{\textbf{{Solution consistency matching.}}}
For $s<t$, SoFlow enforces a finite solution-consistency constraint over an intermediate time $l$ with $s\le l\le t$.
Using a first-order Taylor approximation to the intermediate state,
$\rvx_l \approx \rvx_t + v(\rvx_t, t, c\,|\,x_0)(l-t)$,
and a stop-gradient target $f_{\theta^-}$, the solution-consistency matching (SCM) loss is
\begin{equation}
\mathcal{L}_{\text{SCM}}
= \mathbb{E}
\left[
    w_{\text{SCM}}\,
    \big\|
        f_\theta(\rvx_t,t,s,c)
        -
        f_{\theta^{-}}\!\big(\rvx_t + v(\rvx_t, t,c\,|\,x_0)(l-t),\, l,\, s\big)
    \big\|_2^2
\right],
\label{eq:soflow_scm}
\end{equation}
where $w_{\text{SCM}}(t) = \frac{1}{(t-l)(t-s)}\times \frac{1}{(\frac{\text{MSE}}{(t-l)^2}+\epsilon)^p}$ following \cite{luo2025soflow}.
The training loss is $\mathcal{L} = \lambda\,\mathcal{L}_{\text{FM}}+(1-\lambda)\,\mathcal{L}_{\text{SCM}}$, where $\lambda$ controls the fraction of a data batch dedicated to $\mathcal{L}_{\text{FM}}.$ 
In the following sections, we introduce architectural and training innovations to make this solution-flow paradigm scalable and robust for high-resolution video generation.
% In our work, we build on this formulation and improve on the architectural and training mechanisms needed to make solution-flow training practical and better at video. 
% In our work, we build on this formulation and improve on the architectural and training mechanisms needed to make solution-flow training practical and better at video. The total training loss is $\mathcal{L} = \lambda\,\mathcal{L}_{\text{FM}}+(1-\lambda)\,\mathcal{L}_{\text{SCM}}$, where $\lambda$ controls the fraction of a data batch dedicated to $\mathcal{L}_{\text{FM}}.$ 

% \noindent{\textbf{Training with token-dropping.}}
\subsection{Training with Token-Dropping.}
Token-dropping reduces the quadratic attention cost in DiTs by shortening the token sequence. For a drop ratio $r$, we
remove $\lfloor rN\rfloor$ tokens and process only the remaining subset, yielding substantial compute savings.
Recently, SPRINT~\cite{park2025sprint} stabilized training with high-ratio dropping (\eg, 75\%) by restructuring the transformer into three stages—an encoder, a
deep middle stack, and a decoder—and connecting encoder and decoder with a long dense residual path.

Concretely, SPRINT computes dense shallow features, applies token dropping before the middle blocks, and runs middle stack only on the surviving sparse tokens. To fuse sparse and dense features, it lifts the sparse output back to $N$ by padding dropped positions with a fixed \texttt{[MASK]} token, and fuses it with the dense shallow features via long residual connection.
Training follows a two-stage schedule: long high-drop pre-training, followed by short full-token fine-tuning where the
middle blocks switch back to dense inputs to close the train--inference gap. In our method, we adopt this SPRINT's design, training schedule, and \emph{apply token-dropping only to the middle blocks}.
\begin{figure}[t]
    \centering
    \includegraphics[width=\linewidth]{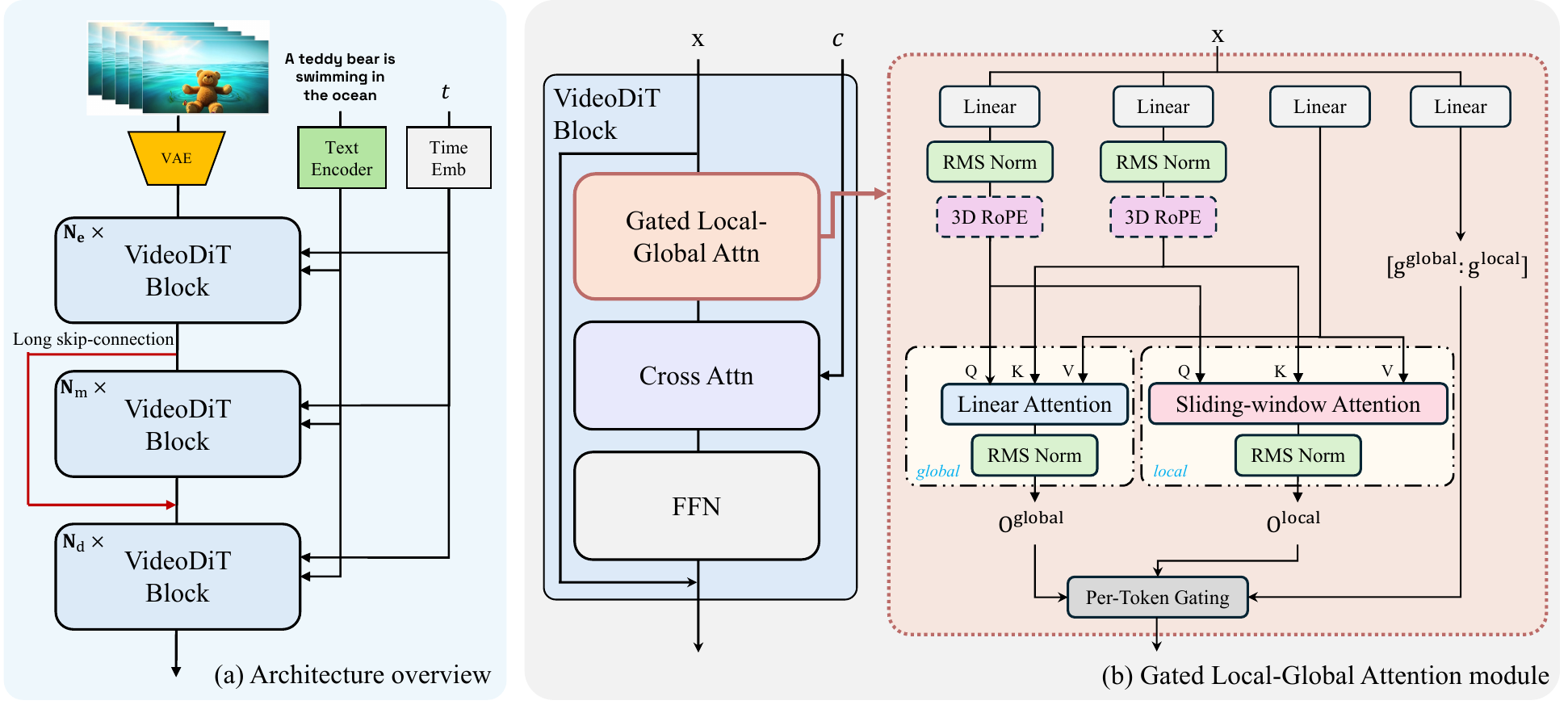}
    \caption{\textbf{Overview of our VideoDiT architecture}. (a)  The network processes video tokens through stacked DiT blocks, utilizing long skip connections to stabilize information flow when tokens are aggressively dropped. (b) Given a DiT block with Gated Local–Global Attention (GLGA) module, input tokens are projected into shared queries, keys, and values. These features are processed by two parallel mechanisms: a linear attention branch for efficient global context and a sliding-window attention branch for expressive local detail. Finally, an input-aware gating ($g$) adaptively fuses the global ($O^{\text{global}}$) and local ($O^{\text{local}}$) outputs on per-token basis.}
    \label{fig:arch}
    \vspace{-1em}
\end{figure}

\section{\modelname: Training Few-Step Video Generator from Scratch}
% \subsection{Overview}
% We propose a novel framework that addresses the two dominant computational bottlenecks of video diffusion transformers: (i) the expensive per-step computation driven by spatiotemporal attention over long token sequences, and (ii) the iterative sampling process, which multiplies this per-step cost by tens of denoising steps. To overcome these, our framework introduces: 1) An \textbf{efficient VideoDiT backbone} featuring \emph{Gated Local--Global Attention (GLGA)}, which substantially reduces per-step compute while remaining robust to aggressive token dropping. 2) An \textbf{efficient few-step VideoDiT training recipe} that reduces the training overhead of solution-flow matching~\cite{luo2025soflow} via path-drop guided training, while incorporating a novel global long-jump consistency objective to minimize integration errors and ensure high-fidelity generation at extremely low step counts.

We propose a novel framework that addresses the two computational bottlenecks of video diffusion training: (i) the per-step spatiotemporal attention computation, and (ii) the number of denoising steps in iterative sampling. To overcome these, our framework introduces: 1) An \textbf{efficient VideoDiT backbone} featuring \emph{Gated Local--Global Attention (GLGA)}, which substantially reduces per-step compute while remaining robust to aggressive token dropping. 2) An \textbf{efficient few-step VideoDiT training recipe} that reduces the training overhead of solution-flow matching~\cite{luo2025soflow} via path-drop guided training, while incorporating a novel global long-jump consistency objective to minimize integration errors and ensure high-fidelity generation at extremely low step counts.

\subsection{Efficient \& Token Droppable VideoDiT Architecture}
\label{sec:droppable_videodit}

%Our architecture targets the dominant per-step bottleneck of video diffusion transformers: the quadratic complexity of spatiotemporal softmax attention. While replacing softmax with linear attention reduces this complexity, prior video DiT designs often attempt to recover lost locality and expressivity by adding convolutional mixing (\eg, Mix-FFN in Sana-Video~\cite{chen2025sana}). However, convolution assumes a consistent spatial neighborhood layout. This makes it fundamentally incompatible with aggressive random token dropping, as dropping tokens destroys the grid locality that convolutions rely upon.
We address the primary bottleneck in VideoDiTs: the quadratic complexity of spatiotemporal attention. While replacing softmax with linear variants reduces this overhead, existing designs typically attempt to recover lost locality and expressivity by incorporating convolutional mixing (\eg, the Mix-FFN in Sana-Video~\cite{chen2025sana}). However, convolutions rely on a consistent spatial neighborhood layout.
It makes them incompatible with aggressive random token dropping, as removing tokens destroys the structured grid locality upon which convolutions depend.
%This makes them fundamentally incompatible with aggressive random token dropping, as removing tokens destroys the structured grid locality upon which convolutions depend.

In contrast, we propose a \textbf{token-droppable hybrid attention} block that preserves linear efficiency and recovers local expressivity through a mechanism that remains well-defined even when tokens are dropped. Our \textbf{Gated Local–Global Attention (GLGA)} block combines:
\textbf{(i)} \emph{Global Linear Attention} for long-range spatiotemporal context, and
\textbf{(ii)} \emph{Local Sliding-Window Attention} for high-frequency details, and \textbf{(iii)} \emph{Input-Aware Gating} to adaptively fuse the two branches at a per-token level. Since sliding-window attention operates over the set of available tokens within a window rather than a dense grid, it is naturally compatible with token dropping. Overall architecture is illustrated in Fig.~\ref{fig:arch}. 

%Crucially, unlike convolutions, sliding-window attention operates over the set of available tokens within a specified radius rather than a fixed coordinate grid. This makes GLGA naturally robust to token dropping, as the local attention mechanism simply attends to the surviving subset of neighbors.
%\\
%GLGA provides linear global aggregation, expressive local refinement, and dynamic routing to balance the two. \\

\noindent\textbf{Global Linear Attention.}
Given an input sequence $X\in\mathbb{R}^{N\times d}$, we compute queries, keys, and values
$Q=XW_Q$, $K=XW_K$, $V=XW_V$, apply Rotary Positional Embeddings (RoPE)~\cite{su2021roformer} to $Q$ and $K$.
For clarity, we describe a single-head formulation; multi-head attention follows standard practice.
% For clarity we describe a single head; multi-head attention applies the same computation per head.
Following~\cite{team2025kimi,yang2024gated}, we use a denominator-free linear attention variant:
% (Gated Delta Networks, kimi linear, :
{
\setlength{\abovedisplayskip}{2pt}
\setlength{\belowdisplayskip}{2pt}
\begin{equation}
    O_{i}^{\text{global}} = Q_i\,S, \qquad
    S = \sum_{j=1}^{N} K_j^{\top} V_j \in \mathbb{R}^{d\times d},
    \label{eq:global_linear_attn}
\end{equation}
}
where $S$ is a global key--value summary shared across all queries. This reduces the quadratic $QK^\top$ interaction to a
single global aggregation plus per-token projection, yielding linear scaling in $N$.
We omit the normalization denominator typical of kernelized linear attention, as it can induce numerical instability when combined with RoPE. Instead, we apply RMS normalization to each branch output, improving training stability and ensuring well-behaved fusion with the local attention branch.
%We omit the normalization denominator typically found in kernelized linear attention, as it can become numerically unstable when combined with RoPE. Instead, we apply RMS normalization to each branch output, improving training stability and yielding well-behaved fusion with the local attention branch. %\\
%Notably, we omit the normalization denominator typically found in kernelized linear attention, as it can induce numerical instability when coupled with RoPE-transformed keys. To ensure stable gradients and balanced feature scales, we instead apply RMS normalization to the output of the global branch. This stabilization is critical for well-behaved fusion with the local attention branch, particularly in the multi-pass training regime of solution-flow.

\noindent{\textbf{Local Window Attention.}}
Since linear attention can underfit local structures~\cite{fan2025breaking,yang2024gated,chen2025sana}, we compute softmax attention over a sliding window based on rasterized token order. Let $\mathcal{W}(i)$ denote the index set of keys within the window of radius $|\mathcal{W}|$ centered at token $i$. We compute
{
\setlength{\abovedisplayskip}{2pt}
\setlength{\belowdisplayskip}{2pt}
\begin{equation}
O_i^{\text{local}} = \sum_{j\in \mathcal{W}(i)} \mathrm{softmax}\left(\frac{Q_i K_j^\top}{\sqrt{d}}\right) V_j.
\label{eq:local_attn}
\end{equation}
}
Note that removing tokens simply reduces the size of $\mathcal{W}(i)$ without altering the operation itself. This requires $\mathcal{O}(N |\mathcal{W}|)$ compute, where $|\mathcal{W}|$ is the maximum tokens per window. To minimize parameter and compute overhead, we reuse the same $Q, K, V$ projections as the linear-attention branch. %\\

%While linear attention achieves global reach, it can underfit high-frequency local structures~\cite{fan2025breaking,yang2024gated,chen2025sana}. To recover this expressivity, we compute softmax attention over a sliding window based on the rasterized token order. Let $\mathcal{W}(i)$ denote the index set of surviving keys within a window of radius $w$ centered at token $i$. We compute:
%Crucially, unlike convolutions that require a fixed grid, removing tokens simply reduces the cardinality of $\mathcal{W}(i)$ without altering the operation. This maintains a complexity of $\mathcal{O}(N w)$, where $w$ is the window size. To minimize parameter overhead, we reuse the $Q, K, V$ projections from the global branch, allowing the local branch to act as a high-resolution refinement of the global context.

\noindent{\textbf{{Input-Aware Gating and Fusion.}}}
To dynamically balance global and local contexts, we predict per-token scalar gates $g^{\text{global}}_i, g^{\text{local}}_i \in [0,1]$:
\begin{equation}
    g =[g^{\text{global}}, g^{\text{local}}] = \sigma(X W_g),
\end{equation}
where $W_g \in \mathbb{R}^{d \times 2}$ is a linear projection and $\sigma$ is the sigmoid function. The final output is the gated sum:
\begin{equation}
    O_i = g^{\text{global}}_i \odot O_i^{\text{global}} + g^{\text{local}}_i \odot O_i^{\text{local}}.
    \label{eq:hybrid_fusion}
\end{equation}

\noindent{\textbf{{Sparse softmax interleaving \& Token-dropping compatibility.}}}
To further boost expressivity, we sparsely interleave full softmax attention blocks (\eg, every $K$ blocks) following recent hybrid architectures~\cite{gemmateam2025gemma3technicalreport,Beltagy2020Longformer}. Because our GLGA backbone relies solely on attention and pointwise layers—avoiding spatial convolutions—it is inherently compatible with random token dropping. 
%Following the SPRINT~\cite{park2025sprint} architecture, we divide our network into a dense encoder, a sparse middle stack, and a dense decoder. We apply token dropping exclusively to the middle stack, and integrate a long skip connection bridging the encoder and decoder to stabilize information flow across the heavily dropped intermediate layers. \\
%To further enhance expressivity, we sparsely interleave full softmax attention blocks (\eg, every $K$-th layer) following recent hybrid transformer paradigms~\cite{gemmateam2025gemma3technicalreport,Beltagy2020Longformer}. Because our GLGA backbone relies exclusively on attention and pointwise operations—eschewing grid-dependent spatial convolutions—the entire architecture remains inherently robust to aggressive token dropping. This design ensures that both local and global dependencies are preserved regardless of the sparsity pattern.

\begin{figure}[t]
    \centering
    \includegraphics[width=\linewidth]{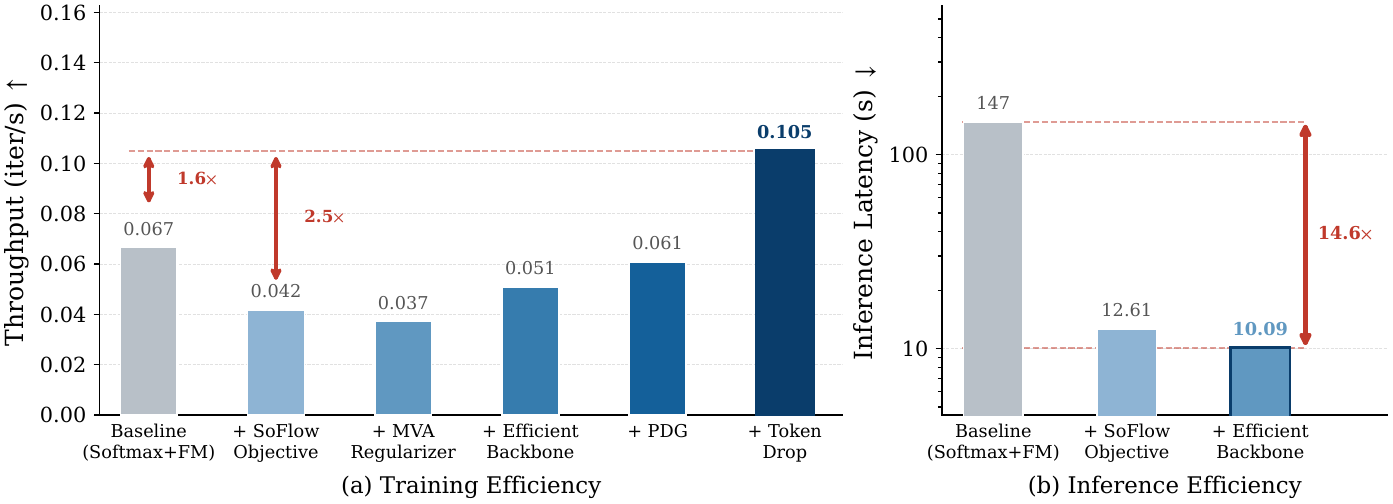}
    \caption{\textbf{End-to-end efficiency gains from our framework.} (a) \textbf{Training efficiency} measured as throughput (iterations/sec). Relative to the standard flow-matching baseline, introducing the solution-flow (SoFlow) objective and the MVA regularizer increases per-iteration overhead and therefore reduces throughput. In contrast, our efficient backbone, PDG training, and token dropping provide consistent speedups, improving throughput by up to $1.6\times$ and $2.5\times$ over the baselines. (b) \textbf{Inference efficiency.} Our overall framework substantially reduces end-to-end inference time relative to the softmax+flow-matching baseline ($147\rightarrow10.09$s).}
    \label{fig:efficiency}
    \vspace{-1.5em}
\end{figure}

\subsection{Efficient Few-Step VideoDiT Training}
\label{sec:pdg}
We build on SoFlow~\cite{luo2025soflow} to train a bi-time solution function $f_\theta(x_t,t,s,c)$ for few-step generation.
While this enables teacher-free few-step training from scratch, scaling guided solution-flow training to long,
high-resolution videos is challenging due to its substantially higher training cost than standard flow matching. Pseudo-codes for training and inference are provided in Algorithms~\ref{alg:training} and~\ref{alg:inference}.

\smallskip
\noindent{\textbf{Computational overheads in video solution-flow training.}}
Compared to standard flow matching, solution-flow training introduces additional forward-pass overhead from two sources.
First, the \textit{Solution-Consistency (SCM)} objective evaluates the model at multiple time pairs, requiring extra model evaluations beyond the single forward used in flow matching.
Second, \textit{Classifier-Free Guidance (CFG)} distillation necessitates constructing a guided velocity target during training:
\begin{equation}
v_g(x_t,t,c\,|\,x_0) \;=\; w\,v(x_t,t,c\,|\,x_0) + (1-w)\,v(x_t,t,\emptyset\,|\,x_0),
\end{equation}
In SoFlow, the conditional term is approximated by the ground-truth velocity $v(x_t,t,c|x_0)$, but the unconditional term requires an additional network evaluation $v_\theta(x_t,t,\emptyset)$. Combined with the heavy quadratic attention of video DiTs, these extra passes make naive solution-flow training prohibitively expensive to scale. 

\smallskip
\noindent\textbf{Scaling Solution-Flow via Multi-Level Efficiency.}
% We address these costs with complementary mechanisms that reduce the cost of each forward and reduce guidance overhead.
% (i) We use our \emph{efficient GLGA backbone} (\cref{sec:droppable_videodit}) to lower per-forward attention cost.
% (ii) We apply \emph{token dropping} during most of training to shorten the sequence length, reducing compute and memory for every forward pass (including SCM).
% (iii) To slash guidance overhead, we propose \emph{Path-Drop Guided (PDG) training}, which replaces the unconditional forward with a cheap \emph{weak-path} forward that skips a large fraction of middle transformer blocks. Together, these changes make guided solution-flow training practical at large-scale video. %\\
%We mitigate these overheads through several complementary mechanisms. First, our efficient GLGA backbone (\cref{sec:droppable_videodit}) lowers per-pass attention complexity. Next, token dropping shortens the sequence length during training, reducing the compute and memory footprint of every forward pass. To slash guidance overhead, Path-Drop Guided (PDG) training replaces the unconditional pass with a cheap weak-path forward that skips middle transformer blocks. Finally, Mean-Velocity Additivity (MVA) reduces integration error during large-jump few-step sampling. Together, these innovations make guided solution-flow practical for large-scale video.
% We address these costs with complementary mechanisms that reduce the cost of each forward and reduce guidance overhead.
We propose complementary mechanisms to reduce the cost of each forward-pass and reduce guidance overhead.
First, we use our \emph{efficient GLGA backbone} (\cref{sec:droppable_videodit}) to lower per-forward attention cost. Next, we apply \emph{token dropping} during most of training to shorten the sequence length, reducing compute and memory for every forward pass (including SCM). Finally, to slash guidance overhead, we propose \emph{Path-Drop Guided (PDG) training}, which replaces the unconditional forward with a cheap \emph{weak-path} forward that skips a large fraction of middle transformer blocks. Additionally, we introduce \emph{Mean-Velocity Additivity (MVA)} to reduce error accumulation in solution-flow over large-jumps in few-step sampling. Together, these changes make guided solution-flow training practical at large-scale video. 

\smallskip
\noindent{\emph{(a) Path-Drop Guided (PDG) training.}}
Given a model with $L$ blocks $\{B_1,\dots,B_L\}$, we define a weak path that skips intermediate blocks $B_{i+1}$ through $B_j$ as:
\begin{equation}
\mathrm{Weak}(X) = (B_L \circ \cdots \circ B_{j+1}) \circ \mathrm{Id} \circ (B_{i} \circ \cdots \circ B_{1})(X),
\qquad 1 \le i < j \le L,
\label{eq:weak_path}
\end{equation}
where $\mathrm{Id}$ denotes the identity mapping. Conveniently, we align this skipped region $(i, j)$ with the sparse middle stack of our architecture.
We replace the expensive unconditional baseline in CFG with this weak-path prediction:
\begin{equation}
v_{\text{PDG}}(x_t,t,c\,|\,x_0) = w\,v(x_t, t, c\,|\,x_0)+(1-w)\,v_{\theta}^{\text{weak}}(x_t,t,\emptyset).
\label{eq:pdg_cfg}
\end{equation}
We then plug $v_{\text{PDG}}$ into SoFlow’s FM and SCM objectives to train the full-path model. Inspired by early works~\cite{karras2024guiding,park2025sprint,haji2026one}, interpolating with a computationally "free" weak model provides a sufficiently aligned baseline to stabilize guided targets while eliminating the compute overhead of a full unconditional pass. 

\smallskip
\noindent{\emph{(b) Global long-jump consistency via Mean-Velocity Additivity.}}
Local consistency objectives in Eq.~\ref{eq:soflow_scm} accumulate integration errors over the large jumps required for few-step sampling. To reduce this, we introduce a global regularizer derived from the exact additivity of the mean velocity. Let $\bar v_\theta(x_t,t,s,c) \triangleq \frac{x_t - f_\theta(x_t,t,s,c)}{t-s}$ denote the predicted mean velocity over $[s,t]$. For the ground-truth flow, displacements are additive across any intermediate time $s < \ell < t$, yielding the exact constraint:
\begin{equation*}
    (t-s)\;\bar v(x_t,t,s,c) = (t-\ell)\;\bar v(x_t,t,\ell,c) + (\ell-s)\;\bar v(x_\ell,\ell,s,c).
% \label{eq:mva}
\end{equation*}
We enforce this constraint using a stop-gradient network, $\theta^{-}$. By substituting the mean velocity definition into the $L_2$ residual of this constraint, the $x_t$ terms perfectly cancel. The mean-velocity additivity (MVA) objective simplifies to a direct composition error, enforcing the semigroup property of the flow:
\begin{equation}
\mathcal{L}_{\mathrm{MVA}} = \mathbb{E}_{x_t,c,t,\ell,s} \left[ \left\| f_\theta(x_t,t,s,c) - f_{\theta^{-}}(f_{\theta^{-}}(x_t,t,\ell,c),\ell,s,c) \right\|_2^2 \right].
\label{eq:mva}
\end{equation}
% Crucially, by our theoretical analysis (detailed in Appendix), minimizing MVA loss strictly controls the global integration error over large discretization steps, providing a theoretical guarantee for long-jump consistency. 
Our theoretical analysis (see Appendix~\ref{app:mva_theory}) shows that minimizing MVA loss strictly controls the global integration error over large discretization steps. 
To directly align the training objective with the inference trajectory, we restrict $t$, $s$, and $\ell$ to a predefined discrete set of evaluation timesteps (\eg, an 64-step schedule), and randomly sample long-jump intervals $(t,s,l)$ from this set such that $s < \ell < t$. 
We optimize the total loss via a compute-efficient batch-splitting strategy: $\mathcal{L} = a \mathcal{L}_{\mathrm{FM}} + b \mathcal{L}_{\mathrm{SCM}} + (1-a-b) \mathcal{L}_{\mathrm{MVA}}.$ 
Specifically, we partition each batch into three disjoint groups with ratios $a$, $b$, and $(1-a-b)$. 
Thus, the two additional EMA forward passes required for $\mathcal{L}_{\mathrm{MVA}}$ are only computed on a strict subset of the batch. 
% We further mitigate this cost by applying aggressive token dropping via our GLGA backbone during the MVA target evaluations. 
We further lower this cost via aggressive token dropping in our GLGA backbone. 
To ensure stable convergence, we employ a delayed-activation schedule: the MVA ratio $(1-a-b)$ is strictly zero during early training, allowing the model to learn the base flow dynamics before the long-jump consistency constraint is introduced. 

\smallskip
\noindent{\textbf{Empirical Efficiency Gains.}}
Fig.~\ref{fig:efficiency} summarizes our cumulative gains on 480p video.
On the training side (throughput in iters/s), adopting SoFlow incurs 40\% additional overhead compared to a Flow-Matching (0.067 $\rightarrow$ 0.042 it/s) due to extra forward passes from solution consistency and guidance. Integrating our global MVA regularizer adds a 7\% further cost (0.042 $\rightarrow$ 0.037 it/s). Our efficient GLGA backbone begins to recover this compute cost (0.037 $\rightarrow$ 0.051 it/s), and PDG further minimizes guidance overhead (0.051 $\rightarrow$ 0.061 it/s). Finally, enabling aggressive token dropping yields the largest gain (0.061 $\rightarrow$ 0.105 it/s), resulting in an overall \textbf{1.6$\times$ and 2.5$\times$ training speedup} over standard FM and SFM, respectively.
On the inference side (latency), few-step solution-flow sampling provides a massive reduction versus standard iterative diffusion (147 $\rightarrow$ 12.61 s), and our efficient backbone further reduces the per-step cost (12.61 $\rightarrow$ 10.09 s). This culminates in a \textbf{14.6$\times$ overall inference latency reduction}. 
\section{Experiment}

\subsection{Implementation Details}
Our architecture integrates the Wan~\cite{wan2025wan} structure with SPRINT~\cite{park2025sprint} sparse-training paradigm. 
Specifically, our transformer blocks follow an encoder--decoder structure connected by a long residual path (\cref{fig:arch}), with each block comprising self-attention, cross-attention, and a feed-forward network. We replace self-attention with our GLGA, sparsely interleaving full softmax attention every 8 layers.
We conduct experiments on two datasets: (i) class-conditional Kinetics-700~\cite{kay2017kinetics}, and (ii) a large-scale, proprietary text-to-video dataset. All models are trained from scratch with random initialization. We apply 75\% token-dropping to the middle stack for the first 80\% of training and fine-tune the remainder with full tokens. 
Detailed model hyperparameters and training configurations are provided in the Appendix~\ref{app:hyper}.

\begin{figure}[t]
  \centering
  \begin{minipage}[t]{0.49\linewidth}
    \centering
    \includegraphics[width=\linewidth]{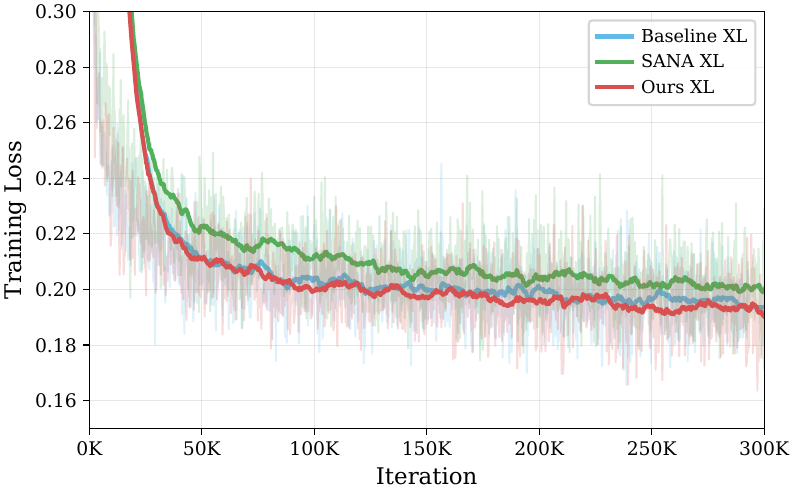}
    \vspace{-0.3em}
    {\small (a) Training loss vs. iterations.}
  \end{minipage}\hfill
  \begin{minipage}[t]{0.49\linewidth}
    \centering
    \includegraphics[width=\linewidth]{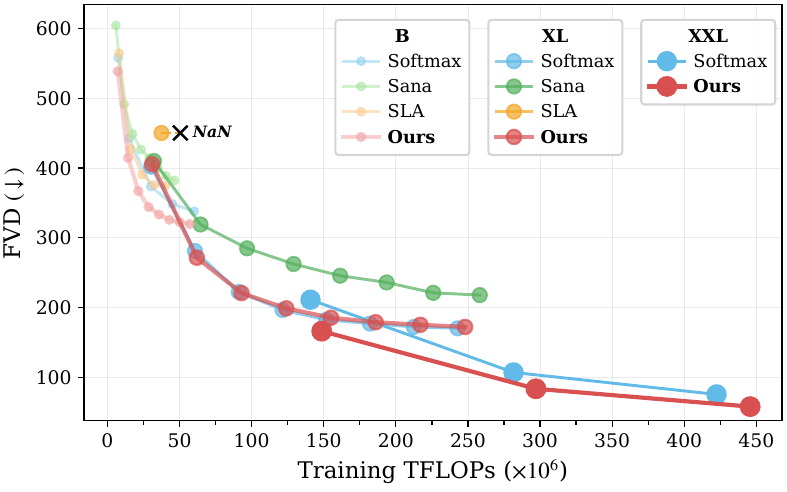}
    \vspace{-0.3em}
    {\small (b) FVD vs. training FLOPs.}
  \end{minipage}
  \caption{\textbf{Systemic comparison on Kinetics 700.}
(a) \textbf{Training loss vs.\ iterations} comparing the softmax-attention baseline, Sana-Video attention, and our GLGA module. 
(b) \textbf{FVD vs.\ training compute} across various model scales for softmax, Sana-Video attention, sparse-linear attention (SLA), and GLGA. GLGA consistently achieves lower loss and lower FVD at matched compute, and its advantage persists as model scale increases; SLA become
unstable and diverge under the same training setup.}
  \label{fig:performance_comparison}
  \vspace{-1em}
\end{figure}

\subsection{Controlled Experiments on Kinetics-700}
To isolate the expressiveness of our efficient self-attention block, we systematically compare GLGA against several baselines using an identical model structure (without token-dropping). These include standard softmax attention, linear attention with Mix-FFN used in Sana-Video~\cite{chen2025sana}, and Sparse Linear Attention (SLA)~\cite{zhang2025sla}. We set SLA to an 80\% sparsity level (default=95\%), as we empirically observed that higher ratios induced training instability.

\smallskip
\noindent{\textbf{Expressiveness.}}
Fig.~\ref{fig:performance_comparison}(a) reports the training loss over iterations. GLGA matches or outperforms the softmax baseline in early optimization and attains a consistently lower loss throughout the training. In contrast, Sana-Video's attention exhibits substantially higher loss across the full schedule, indicating slower convergence and a higher final error. This shows that GLGA’s hybrid design—with its adaptive gating between global aggregation and local refinement—preserves optimization stability while retaining superior modeling capacity.
Fig.~\ref{fig:performance_comparison}(b) highlights scalability by plotting FVD~\cite{unterthiner2019accurategenerativemodelsvideo} against training FLOPs across three model scales (B, XL, XXL). Across all compute budgets and parameter scales, GLGA consistently achieves a lower FVD at matched compute, effectively delivering better generation quality per FLOP. Notably, this advantage persists as we scale from B $\rightarrow$ XL $\rightarrow$ XXL, with GLGA scaling much more favorably than the sparse/linear baselines and remaining highly competitive with the heavy softmax baseline. Finally, we note that even at the lowered 80\% sparsity level, SLA proved highly unstable and diverged to NaN across all scaled regimes, severely limiting training ability. In contrast, GLGA remains remarkably stable across all scales while delivering an optimal quality--efficiency trade-off.

\smallskip
\noindent{\textbf{Improvement against baselines.}}
Tab.~\ref{tab:fvd_comparison} disentangles our architectural and objective improvements against baselines. 1) \textbf{Base1$\rightarrow$Base2}: Swapping softmax for Sana's attention block degrades FVD from 188.9 to 217.7, demonstrating limited capacity of Sana. 2) \textbf{Base1$\rightarrow$Ours2}: GLGA backbone trained with flow-matching fully recovers the performance (189.7 FVD). 3) \textbf{Ours2$\rightarrow$Ours4}: Upgrading to our PDG-SFM objective then yields a massive compound benefit: reducing inference cost by 10$\times$ (40 to 4 NFEs) while simultaneously improving generation quality to 146.7 FVD. Qualitative result in Fig.~\ref{fig:qual_figure} further demonstrates this substantial improvement.

\subsection{Comparison of Text-to-Video Models}
We evaluate our 1.5B \modelname model on the challenging task of high-resolution (480p) text-to-video generation. To demonstrate its competitive performance and efficiency, we compare against recent state-of-the-art video models, including Sana-Video~\cite{chen2025sana}, rCM~\cite{zheng2025large}, SLA~\cite{zhang2025sla}, VSA~\cite{zhang2025vsa}, and Wan 2.1~\cite{wan2025wan}. 

\smallskip
\noindent{\textbf{Efficiency.}}
Fig.~\ref{fig:efficiency_comparison}(a) reports forward-pass latency against token count. Wan 2.1 with softmax attention scales poorly, with latency increasing sharply in the high-token regime. While SLA reduces cost relative to softmax, it remains noticeably slower as the token count grows for sparsity of 90\%.
Sana-Video is slightly faster than GLGA at 125K tokens, reflecting its streamlined linear-style design. However, the experiments (in Fig.~\ref{fig:performance_comparison} and Tab.~\ref{tab:fvd_comparison}) show that this speed advantage comes with a degradation in optimization and quality: Sana-Video attains higher training loss and worse FVD at matched training compute.
In contrast, \modelname strikes a superior balance, outperforming softmax in speed while strictly preserving generation quality.
%running significantly faster than softmax while strictly preserving quality. 
Furthermore, \modelname is compatible with token dropping—applying a 75\% token drop yields the absolute lowest latency and the most favorable scaling curve, effectively handling the high-resolution compute bottleneck.
Beyond per-step scaling, Fig.~\ref{fig:efficiency_comparison}(b) demonstrates the compounding benefits of our framework for end-to-end 480p video generation. By our efficient architecture with the 4-step PDG training recipe, \modelname (1.5B) achieves a remarkable 3.09s diffusion latency (excluding VAE). This is $1.4\times$ to $1.8\times$ faster than recent few-step distillation methods like VSA and rCM, and completely eclipses standard 50-step iterative models—running $23.3\times$ faster than Sana (2.0B), $45.3\times$ faster than Wan2.1-1.3B, and up to $234.6\times$ faster than the heavy Wan2.1-14B.

\noindent{\textbf{Quantitative result.}}
To provide a comprehensive assessment of generation quality across diverse semantic and temporal dimensions, detailed VBench~\cite{huang2023vbench} evaluations comparing \modelname against the baselines are provided in the Tab.~\ref{tab:vbench}.

\begin{figure}[t]
    \centering
    \includegraphics[width=\linewidth]{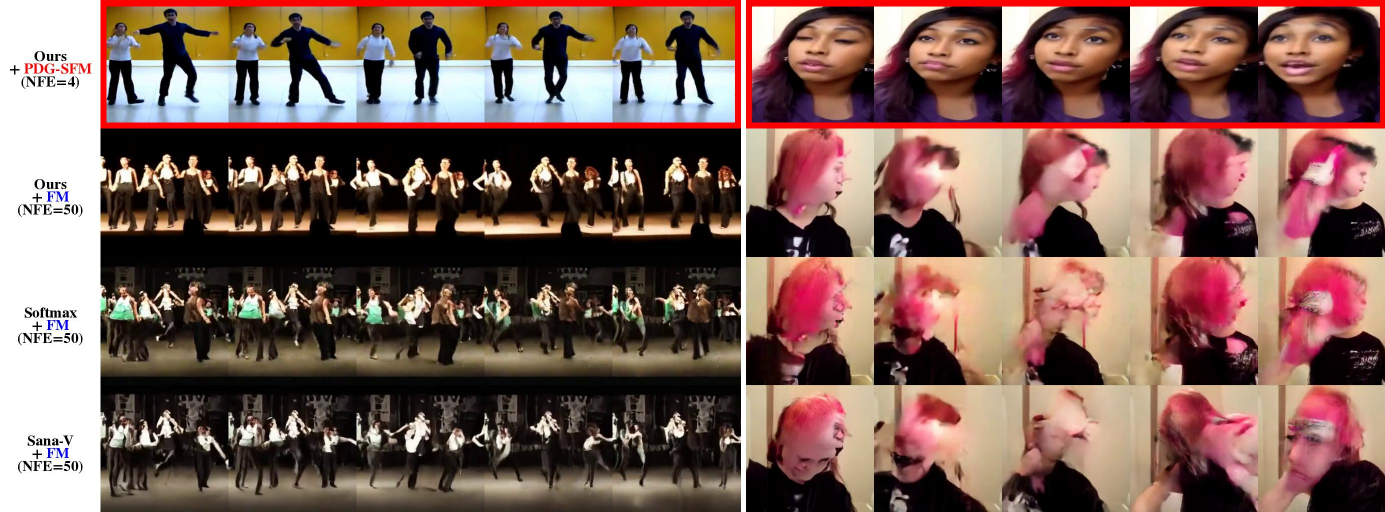}
    \caption{\textbf{Qualitative comparison on Kinetics-700} on 400K training iterations. Our complete framework (Ours+PDG-SFM) successfully generates highly detailed, temporally consistent videos in only 4 inference steps. In contrast, baselines trained with standard Flow Matching (FM)—including standard softmax, and Sana-video's attention—struggle to resolve vivid details and exhibit severe blurring or structural distortion despite using 50 steps.}
    \label{fig:qual_figure}
    \vspace{-1em}
\end{figure}

\vspace{-2.5mm}
\subsection{Analysis}
\vspace{-1mm}

Here, we present results of the analysis conducted on Kinetics-700 dataset. 

\noindent{\textbf{Ablation on architecture.}} 
Tab.~\ref{tab:attn_gating_ablation} isolates GLGA's internal components. Relying solely on global linear or local sliding-window attention yields poor quality (420.8 and 390.2 FVD, respectively). While element-wise summation improves FVD to 352.9, our input-aware gating achieves the best result (325.7 FVD), improving 22\% from pure linear attention. This confirms that dynamically routing global context and local detail per-token is essential for high-fidelity generation. 

\noindent{\textbf{Ablation on training objective.}} 
Tab.~\ref{tab:training_objective} evaluates objectives using a fixed architecture. Standard SFM reduces sampling to 4 NFEs, (347.1 FVD) but incurs massive multi-pass training overhead as described in Fig.~\ref{fig:efficiency}(a). Our PDG-SFM completely recovers training throughput driving the 4-step FVD down to 322.2 via weak-path guidance. 

\noindent{\textbf{Ablation on Mean-Velocity Additivity.}}
In Tab.~\ref{tab:fvd_comparison}, \textbf{Ours3$\rightarrow$Ours4} isolates the impact of MVA regularizer. While the PDG-SFM objective alone can reduce sampling to 4 NFEs (185.2 FVD), it still suffers from integration errors across large time jumps. Explicitly enforcing long-jump consistency via MVA corrects these errors, driving the FVD down to 146.7 at the exact same 4-step inference cost. This substantial 38.5-point improvement confirms that MVA is essential for preserving fidelity in few-step solution flows. 

\begin{table}[t]
\centering
\caption{\textbf{FVD comparison} against baselines. All models are trained for 400K iterations.}
\setlength{\tabcolsep}{8pt}
\renewcommand{\arraystretch}{1.15}
\resizebox{\textwidth}{!}{%
\begin{tabular}{l|c|cc|c|cc|c}
\hline % Replaced \toprule
 & \textbf{Attn.} & \textbf{Objective} & \textbf{MVA} & \textbf{Token drop} & \textbf{Params} & \textbf{NFE} $\downarrow$ & \textbf{FVD} $\downarrow$ \\
\hline % Replaced \midrule
\hline
Base1 & Softmax & FM      & \textcolor{Red}{\ding{55}} & \textcolor{Red}{\ding{55}} & 457M & 40 & 188.9 \\
Base2 & Sana    & FM      & \textcolor{Red}{\ding{55}} & \textcolor{Red}{\ding{55}} & 607M & 40 & 217.7 \\
Base3 & Sana    & FM      & \textcolor{Red}{\ding{55}} & \ding{51} & 607M & 40 & 456.1 \\
\hline % Replaced \midrule
\textbf{Ours1} & GLGA & FM      & \textcolor{Red}{\ding{55}} & \textcolor{Red}{\ding{55}} & 529M & 40 & 191.3 \\
\textbf{Ours2} & GLGA & FM      & \textcolor{Red}{\ding{55}} & \ding{51} & 529M & 40 & 189.7 \\
\textbf{Ours3} & GLGA & PDG-SFM  & \textcolor{Red}{\ding{55}} & \ding{51} & 529M & \textbf{4} & 185.2 \\
\rowcolor{gray!12}
\textbf{Ours4} & GLGA & PDG-SFM  & \ding{51} & \ding{51} & 529M & \textbf{4} & \textbf{146.7} \\
\hline % Replaced \bottomrule
\end{tabular}%
} % end resizebox
\label{tab:fvd_comparison}
\end{table}

\begin{table}[t]
\centering
\small
\setlength{\tabcolsep}{6pt}
\renewcommand{\arraystretch}{1.15}

\begin{minipage}[t]{0.63\linewidth}
\centering
\captionof{table}{\textbf{Ablation study on our GLGA components}. All trained for 300K iterations.}
\label{tab:attn_gating_ablation}
\resizebox{\linewidth}{!}{%
\begin{tabular}{lccc|c}
\hline
 & \textbf{Linear Attn.} &
 \textbf{\makecell{Sliding Window\\Attn.}} &
 \textbf{\makecell{Input-aware\\Gating}} &
 \textbf{FVD} $\downarrow$ \\
\hline
\hline
(a) & \cmark &  &  & 420.8 \\
(b) &  & \cmark &  & 390.2 \\
(c) & \cmark & \cmark &  & 352.9 \\
\rowcolor{gray!15}
(d) & \cmark & \cmark & \cmark & \textbf{325.7} \\
\hline
\end{tabular}%
}
\end{minipage}\hfill
\begin{minipage}[t]{0.36\linewidth}
\centering
\captionof{table}{\textbf{Ablation on training objectives}. All trained for 300K iterations.}
\label{tab:training_objective}
\resizebox{\linewidth}{!}{%
\begin{tabular}{lc|c}
\hline
\textbf{Method} & \textbf{NFE} $\downarrow$ & \textbf{FVD} $\downarrow$ \\
\hline
\hline
FM & 50 & 325.7 \\
FM & 4 & 513.8 \\
\hline
SFM & 4  &  347.1\\
\rowcolor{gray!15}
\textbf{PDG-SFM (Ours)} & 4  &  \textbf{322.2} \\
\hline
\end{tabular}%
}
\end{minipage}

\end{table}

\noindent{\textbf{Robustness to token-dropping.}}
Tab.~\ref{tab:fvd_comparison} also validates GLGA's compatibility with token-drop training. \textbf{Base2$\rightarrow$Base3} shows that dropping tokens in the Sana baseline causes a catastrophic collapse in FVD (456.1 from 217.7) because its convolutions assume a dense spatial grid. In contrast, \textbf{Ours1$\rightarrow$Ours2} demonstrates that GLGA's local branch relies on sliding-window attention—which operates naturally over unstructured sets—allowing it to survive severe dropping and unlock massive training speedups without degradation.

\section{Conclusion}
In this paper, we introduced \modelname, an efficient framework that tackles the dual bottlenecks of softmax-attention complexity and iterative sampling in video diffusion models. We proposed Gated Local--Global Attention, a token-droppable attention that maintains expressivity and efficiency under aggressive token dropping. We also developed Path-Drop Guided solution-flow matching, augmented by a Mean-Velocity Additivity regularizer to ensure efficient and high-fidelity generation. Ultimately, \modelname enables practical few-step video generator training from scratch, establishing a highly efficient paradigm for future video synthesis.

\clearpage

\clearpage  % TODO FINAL: This \clearpage needs to be removed from both review and camera-ready versions.

% \section*{Acknowledgements}
% Please insert your acknowledgments here.

% ---- Bibliography ----
%
% BibTeX users should specify bibliography style 'splncs04'.
% References will then be sorted and formatted in the correct style.
%
% \bibliographystyle{splncs04}
\bibliography{main}

\clearpage

\appendix

% --- Manual Table of Contents ---
% \section*{Appendix Contents}
% \etocsettocstyle{}{}
% \localtableofcontents
% \clearpage

\section{Theoretical Analysis of Mean-Velocity Additivity}
\label{app:mva_theory}

\subsection{Comparison to Local Solution Consistency Bounds}
\label{app:compare_local_bounds}
Solution Flow (SoFlow)~\cite{luo2025soflow} provides a principled analysis of few-step generation by relating the \emph{global} solution-function error to a \emph{local} partial differential equation (PDE) residual. Concretely, SoFlow shows that the discrepancy between the learned solution map $f_\theta(x_t,t,s,c)$ and the ground-truth flow map $f(x_t,t,s,c)$ can be expressed as an integral of a residual term along the true trajectory, implying a bound of the form:
\begin{equation}
\|f(x_t,t,s,c) - f_\theta(x_t,t,s,c)\|_2 \;\le\; |t-s|\,\delta,
\label{eq:compare_soflow_global}
\end{equation}
whenever the PDE residual is uniformly bounded by $\delta$. To obtain an explicit $\delta$, SoFlow connects its local solution-consistency objective to the PDE residual through a Taylor expansion around $t$ with an intermediate point $\ell$ close to $t$, yielding a $\delta$ that scales linearly with $|t-\ell|$.

\smallskip
\noindent
\textbf{Implication for few-step generation.} While the above reasoning is mathematically well-justified in the continuous limit $\ell\!\to\! t$, it highlights a practical challenge for \emph{few-step} sampling, where the jump length $\Delta \triangleq t-s$ becomes macroscopic (e.g., $\Delta \approx 1$ for 1-step sampling). In SoFlow, $\ell$ is determined by a schedule of the form $\ell = t + (s-t)r$, so $|t-\ell| = |t-s|r = \Delta r$, where $r \in [r_{\text{min}},1]$ is an iteration-aware scheduling factor and $r_{\text{min}}$ is small value (\eg, $1\times 10^{-3}$). 
Consequently, the global bound in \eqref{eq:compare_soflow_global} contains a contribution of the form:
$$|t-s| \cdot O(|t-\ell|) \;=\; O(\Delta)\cdot O(\Delta r) \;=\; O(\Delta^2) \cdot r,$$
which can be non-negligible when $\Delta$ is large, severely limiting its accuracy for few-step generation. 
%Intuitively, local finite-difference consistency ensures the learned map behaves correctly under small time changes, but it does not directly constrain whether the learned \emph{finite} jump $(t\!\to\! s)$ remains consistent with the model's own multi-jump trajectory.

\smallskip
\noindent
\textbf{Our perspective: semigroup defect as a global error source.}
In a continuous dynamical system, the exact solution function (\ie, the flow map) satisfies the \emph{semigroup property}: for any $s<\ell<t$, taking one direct step from $t$ to $s$ must perfectly equal stopping at an intermediate time $\ell$ and then stepping to $s$:
\begin{equation}
f(x_t,t,s,c) \;=\; f\!\big(f(x_t,t,\ell,c),\ell,s,c\big).
\label{eq:compare_true_semigroup}
\end{equation}
Neural solution maps only approximate this structure and may violate it over large time gaps. We quantify this violation via the \emph{semigroup defect}—the discrepancy between the model's single-jump prediction $f_\theta(x_t,t,s,c)$ and its own composed multi-jump trajectory $f_\theta\!\big(f_\theta(x_t,t,\ell,c),\ell,s,c\big)$.

In the theoretical analysis below, we establish that our proposed Mean-Velocity Additivity (MVA) objective directly penalizes this semigroup defect. Specifically, Lemma~\ref{lem:mva_semigroup_equiv} proves that the MVA residual is \emph{exactly} the semigroup defect. Building upon this, Theorem~\ref{thm:long_jump_error_bound} and Corollary~\ref{cor:quadratic_shrink} demonstrate that enforcing semigroup consistency suppresses the quadratic contribution of the long-jump error bound from $O(\Delta^2)$ down to $O(\Delta^2/K)$.

\smallskip
\noindent
\textbf{Takeaway.}
Local objectives (e.g., SCM) and global semigroup objectives (MVA) are complementary: local constraints shape the instantaneous dynamics and improve short-step fidelity, while MVA explicitly regularizes the long-interval compositions that are most critical for few-step sampling.

\subsection{Mean-Velocity Additivity is equivalent to Semigroup Consistency}
\label{app:mva_equiv}

We recall the bi-time solution-function parameterization $f_\theta(x_t,t,s,c)$ and define the predicted mean velocity over an interval $[s,t]$ as $\bar v_\theta(x_t,t,s,c) \triangleq \frac{x_t - f_\theta(x_t,t,s,c)}{t-s}$ for $0\le s < t \le 1$.

\begin{lemma}[MVA residual equals semigroup defect]
\label{lem:mva_semigroup_equiv}
Let $g(x_t,t,s,c)$ be any mapping that satisfies the boundary condition $g(x_t,t,t,c)=x_t$.
Fix $0\le s < \ell < t \le 1$ and define $x_\ell \triangleq g(x_t,t,\ell,c)$.
Then the mean-velocity additivity residual in the main paper
\begin{equation}
r_{\mathrm{MVA}}
\;\triangleq\;
(t-s)\bar v_g(x_t,t,s,c)
\;-\;
(t-\ell)\bar v_g(x_t,t,\ell,c)
\;-\;
(\ell-s)\bar v_g(x_\ell,\ell,s,c)
\label{eq:app_mva_residual_def}
\end{equation}
simplifies exactly to the semigroup defect:
\begin{equation}
r_{\mathrm{MVA}}
\;=\;
g(x_\ell,\ell,s,c) - g(x_t,t,s,c).
\label{eq:app_mva_residual_semigroup}
\end{equation}
Consequently, substituting $x_\ell$ back into the norm yields:
\begin{equation}
\|r_{\mathrm{MVA}}\|_2^2
\;=\;
\big\|g\!\big(g(x_t,t,\ell,c),\ell,s,c\big) - g(x_t,t,s,c)\big\|_2^2.
\label{eq:app_mva_equiv_norm}
\end{equation}
\end{lemma}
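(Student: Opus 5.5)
The proof is a direct algebraic substitution. We use the definition of the predicted mean velocity, applied to the generic map $g$. That is, $\bar v_g(y,a,b,c) \triangleq \frac{y - g(y,a,b,c)}{a-b}$ for $b<a$. Since $s<\ell<t$, all three intervals $[s,t]$, $[\ell,t]$ and $[s,\ell]$ are nondegenerate, so each mean velocity in \eqref{eq:app_mva_residual_def} is well defined. The prefactors $(t-s)$, $(t-\ell)$ and $(\ell-s)$ cancel the denominators exactly. This gives three displacement identities:
\begin{equation*}
(t-s)\bar v_g(x_t,t,s,c) = x_t - g(x_t,t,s,c),\qquad
(t-\ell)\bar v_g(x_t,t,\ell,c) = x_t - g(x_t,t,\ell,c),
\end{equation*}
\begin{equation*}
(\ell-s)\bar v_g(x_\ell,\ell,s,c) = x_\ell - g(x_\ell,\ell,s,c).
\end{equation*}

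Next we substitute these into $r_{\mathrm{MVA}}$. The two copies of $x_t$ cancel, which leaves
\begin{equation*}
r_{\mathrm{MVA}} = -g(x_t,t,s,c) + g(x_t,t,\ell,c) - x_\ell + g(x_\ell,\ell,s,c).
\end{equation*}
By the definition $x_\ell \triangleq g(x_t,t,\ell,c)$, the middle terms $g(x_t,t,\ell,c) - x_\ell$ cancel identically. What remains is $g(x_\ell,\ell,s,c) - g(x_t,t,s,c)$, which is \eqref{eq:app_mva_residual_semigroup}. Substituting $x_\ell = g(x_t,t,\ell,c)$ and taking squared norms gives \eqref{eq:app_mva_equiv_norm}.

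No step is a genuine obstacle. The one point needing care is consistency of the intermediate state: the cancellation requires $x_\ell$ in the third term to be exactly the map's own prediction $g(x_t,t,\ell,c)$, not the ground-truth $x_\ell$ or the Taylor surrogate used in $\mathcal{L}_{\mathrm{SCM}}$. This choice is what makes the identity exact rather than approximate. The boundary condition $g(x_t,t,t,c)=x_t$ is not needed in the algebra itself, since all intervals are strict. It only ensures that $g$ is a genuine solution-map parameterization, so that the identity is consistent with the degenerate limits $\ell\to t$ and $\ell\to s$, where both sides vanish.

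Finally, we would remark that with $g=f_\theta$ (and the stop-gradient copies $f_{\theta^-}$ in the composed term), this identity is exactly the reduction from the additivity constraint to the loss \eqref{eq:mva} stated in the main text.
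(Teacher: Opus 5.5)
Your proposal is correct and follows the paper's proof: you write the three prefactor-times-mean-velocity terms as displacements, substitute them into $r_{\mathrm{MVA}}$, and cancel using $x_\ell = g(x_t,t,\ell,c)$. You are also right that the algebra never uses the boundary condition; the paper's proof does not use it either.
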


\begin{proof}
By the definition of mean velocity and $x_\ell$, we have:
\begin{align*}
(t-s)\bar v_g(x_t,t,s,c) &= x_t - g(x_t,t,s,c), \\
(t-\ell)\bar v_g(x_t,t,\ell,c) &= x_t - g(x_t,t,\ell,c) = x_t - x_\ell, \\
(\ell-s)\bar v_g(x_\ell,\ell,s,c) &= x_\ell - g(x_\ell,\ell,s,c).
\end{align*}
Substituting these into Eq.~\eqref{eq:app_mva_residual_def} yields:
\begin{align*}
r_{\mathrm{MVA}}
&=
\big(x_t - g(x_t,t,s,c)\big)
-
\big(x_t - x_\ell\big)
-
\big(x_\ell - g(x_\ell,\ell,s,c)\big) \\
&=
g(x_\ell,\ell,s,c) - g(x_t,t,s,c),
\end{align*}
which proves \eqref{eq:app_mva_residual_semigroup} and hence \eqref{eq:app_mva_equiv_norm}.
\end{proof}

\noindent\textbf{Remark.}
Lemma~\ref{lem:mva_semigroup_equiv} shows that Mean-Velocity Additivity (MVA) is not merely a heuristic regularizer on ``velocities'': the MVA residual reduces \emph{exactly} to a composition (semigroup) defect. Concretely, the residual $r_{\mathrm{MVA}}$ measures the discrepancy between (i) a \emph{direct} jump from $t$ to $s$, $g(x_t,t,s,c)$, and (ii) the model's own \emph{two-leg} trajectory that first jumps to an intermediate time $\ell$ and then to $s$, i.e., $g\!\big(g(x_t,t,\ell,c),\ell,s,c\big)$. Therefore, minimizing $\|r_{\mathrm{MVA}}\|_2^2$ is precisely forcing the learned solution function $f_\theta$ to satisfy the exact semigroup composition property.

\subsection{Long-jump Error Bound under Approximate Semigroup Consistency}
\label{app:long_jump_bound}

Let $f(\cdot,t,s,c)$ denote the ground-truth solution function of the velocity ODE, which satisfies
the \emph{exact} semigroup property:
\begin{equation}
f(x_t,t,s,c) = f\big(f(x_t,t,\ell,c),\ell,s,c\big),\qquad \forall\, s\le \ell \le t.
\label{eq:app_true_semigroup}
\end{equation}
Define the pointwise approximation error
\begin{equation}
e_\theta(x_t;t,s,c) \;\triangleq\; f_\theta(x_t,t,s,c) - f(x_t,t,s,c),
\label{eq:app_pointwise_error}
\end{equation}
and the semigroup defect
\begin{equation}
d_\theta(x_t;t,\ell,s,c)
\;\triangleq\;
f_\theta(x_t,t,s,c)
-
f_\theta\!\big(f_\theta(x_t,t,\ell,c),\ell,s,c\big).
\label{eq:app_semigroup_defect}
\end{equation}

\begin{theorem}[Long-jump error decomposes into short-jump errors + semigroup defect]
\label{thm:long_jump_error_bound}
Assume the following:

\smallskip
\noindent\textbf{(A1) Lipschitzness.}
There exists $L\ge 0$ such that for all $0\le s\le t\le 1$, all conditions $c$, and all $x,y$,
\begin{equation}
\big\| f_\theta(x,t,s,c) - f_\theta(y,t,s,c)\big\|_2 \;\le\; L\,\|x-y\|_2.
\label{eq:app_lipschitz}
\end{equation}

\smallskip
\noindent\textbf{(A2) Local approximation bound for step size $h$.}
For some step size $h>0$, there exist constants $A,B\ge 0$ such that for all $c$, all $x$, and all $t$ satisfying $t-h\ge 0$,
\begin{equation}
\big\| f_\theta(x,t,t-h,c) - f(x,t,t-h,c)\big\|_2 \;\le\; A h + B h^2.
\label{eq:app_local_error}
\end{equation}

\smallskip
\noindent\textbf{(A3) Approximate semigroup consistency.}
There exists $\varepsilon\ge 0$ such that for all $0\le s<\ell<t\le 1$, all $c$, and all $x$,
\begin{equation}
\big\| d_\theta(x;t,\ell,s,c)\big\|_2 \;\le\; \varepsilon.
\label{eq:app_semigroup_eps}
\end{equation}
Then, for any interval $(t,s)$ with macroscopic jump length $\Delta \triangleq t-s$, any integer partition $K\ge 1$, and uniform step size $h=\Delta/K$, the global long-jump error is bounded by:
\begin{align}
\big\| f_\theta(x_t,t,s,c) - f(x_t,t,s,c)\big\|_2
& \;\le\;
(K-1)\varepsilon \;+\; LK\!\left(A\frac{\Delta}{K} + B\frac{\Delta^2}{K^2}\right) \nonumber \\
& \;=\; 
(K-1)\varepsilon \;+\; L\!\left(A\Delta + \frac{B}{K}\Delta^2\right).
\label{eq:app_long_jump_bound}
\end{align}
\end{theorem}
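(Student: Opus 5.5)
The plan is to discretize the jump $[s,t]$ uniformly and telescope along the \emph{true} trajectory, evaluating the \emph{model's} long jump to $s$ from each true intermediate state. This way the Lipschitz constant of (A1) enters only once per step, rather than being compounded as $L^K$ as in naive error propagation along the model's own trajectory.

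\textbf{Setup.} Set $t_k \triangleq t - kh$ for $k=0,\dots,K$, so that $t_0=t$, $t_K=s$ and $h=\Delta/K$. Define the true intermediate states $z_k \triangleq f(x_t,t,t_k,c)$. By the exact semigroup property \eqref{eq:app_true_semigroup},
\[
z_k = f(z_{k-1},t_{k-1},t_k,c), \qquad z_K = f(x_t,t,s,c).
\]
Next define the hybrid quantities $w_k \triangleq f_\theta(z_k,t_k,s,c)$ for $k=0,\dots,K$. Then $w_0 = f_\theta(x_t,t,s,c)$, and by the boundary condition built into the parameterization \eqref{eq:parameterization}, $w_K = f_\theta(z_K,s,s,c) = z_K = f(x_t,t,s,c)$. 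Hence the target error equals $w_0 - w_K = \sum_{k=1}^K (w_{k-1}-w_k)$, and it suffices to bound each increment.

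\textbf{Increments with $k<K$.} Here $s<t_k<t_{k-1}$, so (A3) applies with the triple $(t_{k-1},t_k,s)$ at the point $z_{k-1}$:
\[
\big\|w_{k-1} - f_\theta\big(f_\theta(z_{k-1},t_{k-1},t_k,c),t_k,s,c\big)\big\|_2 \le \varepsilon .
\]
Then (A1) at the fixed times $(t_k,s)$, together with $z_k=f(z_{k-1},t_{k-1},t_k,c)$ and (A2) with $t=t_{k-1}$, gives
\[
\big\|f_\theta\big(f_\theta(z_{k-1},t_{k-1},t_k,c),t_k,s,c\big) - w_k\big\|_2 \le L\big\|f_\theta(z_{k-1},t_{k-1},t_k,c) - f(z_{k-1},t_{k-1},t_k,c)\big\|_2 \le L(Ah+Bh^2).
\]

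\textbf{The last increment $k=K$.} Here no intermediate time is available, so no $\varepsilon$ is paid; this accounts for the $(K-1)$ rather than $K$ in the bound. The increment $w_{K-1}-w_K = f_\theta(z_{K-1},t_{K-1},s,c) - f(z_{K-1},t_{K-1},s,c)$ is bounded directly by (A2) as $Ah+Bh^2$. To absorb this into the uniform form $L(Ah+Bh^2)$, I would observe that (A1) forces $L\ge 1$: taking $t=s$, the map $f_\theta(\cdot,s,s,c)$ is the identity by the boundary condition, so its Lipschitz constant is at least $1$ on any nontrivial space.

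\textbf{Conclusion and main obstacle.} Summing the increments gives
\[
\|f_\theta(x_t,t,s,c)-f(x_t,t,s,c)\|_2 \le (K-1)\varepsilon + KL\Big(A\tfrac{\Delta}{K}+B\tfrac{\Delta^2}{K^2}\Big),
\]
which simplifies to \eqref{eq:app_long_jump_bound}. The main obstacle is choosing the right telescoping sequence. Comparing the model's $K$-step composition $y_k$ with $z_k$ would require propagating errors through repeated Lipschitz maps and would produce a factor like $\sum_k L^k$, not $LK$. Anchoring each term at the true state $z_k$ and using the model's \emph{single} long jump $f_\theta(\cdot,t_k,s,c)$ avoids this compounding. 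This is exactly where (A3) is needed, since it is required for arbitrary base points $x$, including the true states $z_{k-1}$. The one remaining subtlety is the handling of the final step, resolved by the observation $L\ge 1$.
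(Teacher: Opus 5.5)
Your proposal is correct and is essentially the paper's proof. The paper decomposes $e_\theta(x;t_k,s,c)$ with $\ell=t_{k-1}$ into the semigroup defect, the Lipschitz-propagated short-jump error, and the error $e_\theta(x_\ell^\star;\ell,s,c)$ at the \emph{true} intermediate state; unrolling that recursion is exactly your telescoping of $w_{k-1}-w_k$ along the true trajectory, and your $L\ge 1$ remark for the final leg plays the role of the paper's convention $\varepsilon=0$ at the step adjacent to $s$, where it implicitly applies (A1) at equal times.
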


\begin{proof}
We first establish the key one-step decomposition inequality. Fix $s<\ell<t$ and define the exact intermediate state $x_\ell^\star \triangleq f(x_t,t,\ell,c)$. By the true semigroup property \eqref{eq:app_true_semigroup}, we have $f(x_t,t,s,c)=f(x_\ell^\star,\ell,s,c)$. 

By adding and subtracting the intermediate learned terms $f_\theta(x_\ell^\star,\ell,s,c)$ and $f_\theta\!\big(f_\theta(x_t,t,\ell,c),\ell,s,c\big)$, the pointwise approximation error \eqref{eq:app_pointwise_error} cleanly decomposes into three parts:
\begin{align*}
e_\theta(x_t;t,s,c)
&=
\underbrace{f_\theta(x_t,t,s,c) - f_\theta\!\big(f_\theta(x_t,t,\ell,c),\ell,s,c\big)}_{d_\theta(x_t;t,\ell,s,c)}
\\
&\quad +
\underbrace{f_\theta\!\big(f_\theta(x_t,t,\ell,c),\ell,s,c\big) - f_\theta(x_\ell^\star,\ell,s,c)}_{\text{propagation of }e_\theta(x_t;t,\ell,c)}
\\
&\quad +
\underbrace{f_\theta(x_\ell^\star,\ell,s,c) - f(x_\ell^\star,\ell,s,c)}_{e_\theta(x_\ell^\star;\ell,s,c)}.
\end{align*}

\noindent By taking the $L_2$ norm, applying the Lipschitz assumption \eqref{eq:app_lipschitz} to the middle propagation term, and recognizing that $\|f_\theta(x_t,t,\ell,c) - x_\ell^\star\|_2 = \|e_\theta(x_t;t,\ell,c)\|_2$, we obtain:
\begin{equation}
\|e_\theta(x_t;t,s,c)\|_2
\;\le\;
\|d_\theta(x_t;t,\ell,s,c)\|_2 \;+\; L\,\|e_\theta(x_t;t,\ell,c)\|_2 \;+\; \|e_\theta(x_\ell^\star;\ell,s,c)\|_2.
\label{eq:app_recursion_one}
\end{equation}

Now, choose a uniform partition $t=t_K>t_{K-1}>\cdots>t_0=s$ with step size $h=\Delta/K$. Setting the intermediate point $\ell=t_{k-1}$ in \eqref{eq:app_recursion_one} for the interval $(t_k,s)$, and bounding the terms using \eqref{eq:app_semigroup_eps} and \eqref{eq:app_local_error}, gives the following recursion for $k\ge 1$:
$$ \sup_{x,c}\|e_\theta(x;t_k,s,c)\|_2 \;\le\; \varepsilon \;+\; L(Ah + Bh^2) \;+\; \sup_{x,c}\|e_\theta(x;t_{k-1},s,c)\|_2. $$
We adopt the convention that $\varepsilon=0$ when $k=1$, because setting $t_0=s$ causes the semigroup defect to vanish entirely by the boundary condition. Unrolling this recursion over all $K$ steps yields:
\begin{align*}
\sup_{x,c}\|e_\theta(x;t,s,c)\|_2
& \;\le\; 
(K-1)\varepsilon \;+\; KL(Ah+Bh^2) \\
& \;=\;
(K-1)\varepsilon \;+\; L\!\left(A\Delta + \frac{B}{K}\Delta^2\right),
\end{align*}
which verifies the bound in \eqref{eq:app_long_jump_bound}.
\end{proof}

\begin{corollary}[Connection to SoFlow-style bounds and quadratic shrinkage]
\label{cor:quadratic_shrink}
Suppose the local error on step size $h$ satisfies an SoFlow-style bound of the form $Ah+Bh^2$ (as obtained in SoFlow's analysis~\cite{luo2025soflow} via local finite-difference consistency under standard smoothness assumptions). Theorem~\ref{thm:long_jump_error_bound} proves that explicitly enforcing semigroup consistency (e.g., via MVA) reduces the long-jump quadratic error contribution from $O(\Delta^2)$ down to $O(\Delta^2/K)$. This mathematically formalizes why global semigroup constraints are strictly necessary to improve large jump fidelity.
\end{corollary}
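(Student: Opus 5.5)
The corollary essentially restates Theorem~\ref{thm:long_jump_error_bound}, so the plan is to instantiate that theorem and compare it with the SoFlow bound \eqref{eq:compare_soflow_global}.

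First, I would record the baseline. Without semigroup consistency, the only available estimate for a jump of length $\Delta$ is the single-step local bound (A2) with $h=\Delta$. This gives $\|f_\theta(x_t,t,s,c)-f(x_t,t,s,c)\|_2\le A\Delta+B\Delta^2$. That is exactly the $K=1$ case of \eqref{eq:app_long_jump_bound}, where the $(K-1)\varepsilon$ term vanishes. Matching this with Appendix~\ref{app:compare_local_bounds}, the $B\Delta^2$ term plays the role of the $|t-s|\cdot O(|t-\ell|)=O(\Delta^2)r$ contribution. So the quadratic part of the SoFlow-style bound is $O(\Delta^2)$.

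Second, I would invoke Theorem~\ref{thm:long_jump_error_bound} for general $K$ under (A1)--(A3). The bound becomes $(K-1)\varepsilon+LA\Delta+LB\Delta^2/K$. The term quadratic in $\Delta$ is therefore $LB\Delta^2/K=O(\Delta^2/K)$, with the constant $L$ independent of $K$. By Lemma~\ref{lem:mva_semigroup_equiv}, the defect controlled in (A3) is exactly the MVA residual. So driving $\mathcal{L}_{\mathrm{MVA}}$ to a uniform small value $\varepsilon$ is what licenses this $K$-fold decomposition.

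The main obstacle is that the shrinkage is only meaningful if the $(K-1)\varepsilon$ cost does not swamp the gain. I would make the trade-off explicit. The bound improves on the $K=1$ bound whenever $(K-1)\varepsilon< LB\Delta^2(1-1/K)$, which reduces to $\varepsilon<LB\Delta^2/K$. Hence for $\varepsilon\to 0$ (perfect MVA), any $K$ works and the quadratic error can be made as small as desired by taking $K$ large. In general one can choose $K\approx\Delta\sqrt{LB/\varepsilon}$, which balances the two terms to $O(\Delta\sqrt{LB\varepsilon})$.

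A further caveat is a constant-factor mismatch. At $K=1$ the quadratic term of \eqref{eq:app_long_jump_bound} is $LB\Delta^2$, which exceeds the direct baseline $B\Delta^2$ when $L>1$. The comparison should therefore be read as an order-in-$\Delta$ and $K$ statement, $O(\Delta^2)\to O(\Delta^2/K)$, rather than a sharp inequality. With that qualification, this gives the corollary as stated.
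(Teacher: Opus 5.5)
Your proposal is correct and follows the paper's route: the corollary amounts to reading off the quadratic term $LB\Delta^2/K$ from the bound in Theorem~\ref{thm:long_jump_error_bound} and contrasting it with the $K=1$ (SoFlow-style) $O(\Delta^2)$ contribution, which is what the paper's Remark and Appendix~\ref{app:compare_local_bounds} do. Your explicit trade-off ($\varepsilon<LB\Delta^2/K$, balancing choice $K\approx\Delta\sqrt{LB/\varepsilon}$) and the caveat that the theorem's $K=1$ case carries an extra factor $L$ are sound refinements the paper omits; note only that neither your argument nor the paper's establishes the ``strictly necessary'' clause, because comparing upper bounds cannot show necessity.
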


\noindent \textbf{Remark.}
Theorem~\ref{thm:long_jump_error_bound} isolates the fundamental mathematical mechanism behind successful few-step generation. It proves that the long-jump error is governed by two distinct forces: 
\begin{align*}
&1) \quad(K\!-\!1)\varepsilon \quad \text{(accumulated semigroup defect across $K\!-\!1$ compositions)}
\\ & 2) \quad  L\!\left(A\Delta + \frac{B}{K}\Delta^2\right) \quad \text{(propagated local approximation error).}
\end{align*}
If the semigroup defect $\varepsilon$ is driven toward zero—which our MVA objective explicitly targets—the dominant long-jump error becomes entirely governed by the local errors. Crucially, Corollary~\ref{cor:quadratic_shrink} shows that the quadratic expansion $\Delta^2$ is \emph{suppressed by $1/K$}. In practice, $K$ acts as a \emph{virtual refinement factor} induced by how densely the intermediate times $\ell$ are sampled during MVA training. This dense intermediate supervision forces large inference jumps to behave like the stabilized composition of many short transitions, even when inference is executed in only a few NFEs.

% \smallskip
% \noindent
% \emph{Role of the assumptions.}
% Assumption~(A1) (Lipschitzness) ensures stability: errors in intermediate states do not get amplified
% arbitrarily when passed through $f_\theta(\cdot,\ell,s,c)$.
% Assumption~(A2) captures the typical local behavior of approximation error for small steps,
% where the $Ah$ term can be viewed as a first-order (bias) component and the $Bh^2$ term reflects
% a higher-order remainder (e.g., from Taylor/local-consistency approximations).
% Assumption~(A3) quantifies global path inconsistency: $\varepsilon$ upper bounds how far the model deviates
% from the exact semigroup identity, i.e., whether a long jump agrees with composing shorter jumps.
\section{Implementation Details}
\label{app:hyper}
Table~\ref{tab:hyperparameter} summarizes the model hyperparameters used across all experiments.

\subsection{Model}
Our base architecture follows Wan~\cite{wan2025wan} and incorporates the sparse--dense residual design of
SPRINT~\cite{park2025sprint}. The network is structured into an \emph{encoder}, a compute-heavy \emph{middle stack}, and a
\emph{decoder}, comprising $N_e$, $N_m$, and $N_d$ transformer blocks, respectively. This SPRINT-style encoder--decoder
split is connected by a long residual path that bridges early dense features directly to the decoder
(\cref{fig:arch}).
Following latent diffusion frameworks~\cite{wan2025wan}, we operate entirely in a compressed latent space to reduce
compute. The 3D-VAE encodes videos with a $4\times 8\times 8$ (temporal $\times$ spatial) compression factor. To maximize
training throughput, we use the H3AE~\cite{wu2025h3ae} autoencoder during training; it shares the same latent space as the
default Wan-2.1 VAE~\cite{wan2025wan} but provides substantially faster encoding.
Given latent tensors of shape $T'\times H'\times W'$, we configure the local branch in GLGA with a sliding-window size
$|\mathcal{W}| = m \times H' \times W'$, i.e., each window spans approximately $m$ consecutive latent frames. Increasing $m$ enlarges
the local receptive field and improves expressivity, but incurs higher computation. Empirically, we find $m\in\{4,6\}$
offers a good trade-off between preserving fine-grained spatiotemporal detail and maintaining high training throughput.

% Preamble:
% \usepackage{booktabs}
% \usepackage{makecell}
% \usepackage{xcolor,colortbl}

\begin{table}[t]
\centering
\caption{\textbf{Hyperparameters used in our experiments.} Kinetics-700 models (B/XL/XXL) and Text-to-Video model (1.5B).}
\label{tab:hyperparameter}
\setlength{\tabcolsep}{6pt}
\renewcommand{\arraystretch}{1.12}
\resizebox{\linewidth}{!}{%
\begin{tabular}{lcccc}
\toprule
& \makecell{\modelname-B \\ (Fig.~\ref{fig:performance_comparison}, Tab.~\ref{tab:attn_gating_ablation}, \ref{tab:training_objective})}  
& \makecell{\modelname-XL \\ (Fig.~\ref{fig:performance_comparison}--\ref{fig:mva}, Tab.~\ref{tab:fvd_comparison})} 
&  \makecell{\modelname-XXL \\ (Fig.~\ref{fig:performance_comparison})} 
&  \makecell{\modelname-1.5B \\ (Fig.~\ref{fig:efficiency}, Tab.~\ref{tab:vbench})} \\
\midrule
Params.                 & 128M & 529M & 3B & 1.5B \\
Patch size                 & $2\times2\times1$ & $2\times2\times1$ & $2\times2\times1$ & $2\times2\times1$ \\
Total Num. blocks   & 14                & 28                & 36                & 28 \\
$N_e$                      & 2                 & 3                 & 4                 & 3 \\
$N_m$                      & 10                & 22                & 28                & 22 \\
$N_d$                      & 2                 & 3                 & 4                 & 3 \\
Hidden dims                & 768               & 1152              & 2304              & 1536 \\
FFN dims                   & 3072              & 4608              & 8064              & 7680 \\
Num.\ heads                & 12                & 16                & 24                & 12 \\
Softmax-attn ratio         & 6:1               & 8:1               & 8:1               & 8:1 \\
$w_{\text{min}}$           & 2.5               & 2.5               & 2.5               & 2.5 \\
$w_{\text{max}}$           & 4.5               & 4.5               & 4.5               & 4.5 \\
$w_{\text{inf}}$           & 4.0               & 4.0               & 4.0               & 4.0 \\
$m$           & 4               & 4               & 4               & 6 \\
$(u_t, \sigma_t, \text{shift}_t)$           & (0, 1, 3)               & (0, 1, 3)               & (0, 1, 3)               & (0, 1, 3) \\
$(u_s, \sigma_s, \text{shift}_s)$           & (-1, 0.8, 1)               & (-1, 0.8, 1)               & (-1, 0.8, 1)               & (-1, 0.8, 1) \\
\rowcolor{gray!15}\multicolumn{5}{l}{\textit{SoFlow hyperparameters}} \\
Velocity mix ratio         & 0.25              & 0.25              & 0.25              & 0.25 \\
Loss weight coefficient    & 1.0               & 1.0               & 1.0               & 1.0 \\
$l\rightarrow t$ schedule & Exponential               & Exponential               & Exponential               & Exponential \\
\bottomrule
\end{tabular}%
}
\end{table}

\smallskip
\noindent \textbf{Text-to-Video (T2V).}
In the T2V setting, each transformer block applies self-attention, cross-attention, and a feed-forward network (FFN) in
sequence. To reduce compute while retaining expressivity, we replace the quadratic self-attention layers with our
\emph{Gated Local--Global Attention} (GLGA). Unless otherwise stated, we use sparse softmax interleaving, inserting one
standard full softmax-attention layer every 8 GLGA blocks. For text conditioning, we use a pre-trained T5 encoder~\cite{raffel2020exploring}
to obtain prompt embeddings. We additionally introduce a guidance embedding to condition the network explicitly on the
guidance scale $w$; this embedding is summed with the timestep embedding and injected into all blocks via Adaptive Layer
Normalization (AdaLN)~\cite{peebles2023scalable}.

\smallskip
\noindent \textbf{Class-conditional Kinetics.}
For class-conditional generation on Kinetics, we adapt the T2V backbone by removing cross-attention and conditioning on
the class label instead. Specifically, we map the discrete label to a learned class embedding, sum it with the timestep
embedding, and inject the resulting conditioning into the transformer blocks through AdaLN.

\subsection{Loss Objective}
We largely follow the default training setup of SoFlow~\cite{luo2025soflow}, including its adaptive loss weighting,
timestep sampling distribution (\eg, log-normal distribution), and velocity-mix ratio. Following SPRINT~\cite{park2025sprint},
we additionally use structured group-wise token subsampling to ensure local coverage while maintaining global randomness.

\smallskip
\noindent \textbf{Timestep sampling.}
Following SoFlow, we sample timesteps $t$ (and $s$) by drawing a log-normal latent variable and applying a sigmoid
transform, i.e., $t=\sigma(u)$ with $u\sim\mathcal{N}(\mu_t,\sigma_t^2)$, and then apply a shift to bias samples toward
earlier times:
\begin{equation}
t'=\frac{\texttt{shift}\cdot t}{1+(\texttt{shift}-1)t}.
\end{equation}
We use this distribution throughout training. For solution-consistency sampling, we enforce the temporal ordering
$t>s>l$. For the discrete timesteps used by MVA, we instead sample 64 points ($K=64$) uniformly from $[0,1]$ and apply the same shift with $\texttt{shift}=3$.

\smallskip
\noindent \textbf{Path-drop guided training.}
Our PDG regime requires the model to also predict the unconditional velocity field through an inexpensive \emph{weak
path}. During training, with probability $p_{\text{drop}}=0.2$, we replace the conditioning $c$ with the null label
$\emptyset$ and bypass the compute-heavy middle stack. When path-drop is activated, we update the weak-path parameters
using the unconditional flow-matching objective (Eq.~\ref{eq:soflow_fm}) by setting the guidance scale to $w=0$ (Eq.~\ref{eq:pdg_cfg}).

\smallskip
\noindent \textbf{Guidance scale.}
Unlike standard SoFlow, which uses a fixed classifier-free guidance scale during training, we sample the guidance scale
per batch as $w \sim \mathcal{U}(w_{\min}, w_{\max})$. This continuous sampling discourages overfitting to a single
magnitude and enables flexible control over guidance strength at inference time.

\subsection{Inference}
Since $f_\theta(\rvx_t,t,s,c)$ parameterizes the ODE solution map from time $t$ to $s$, we sample
$\rvx_1 \sim \mathcal{N}(0,I)$ and obtain a one-step sample by evaluating the solution function at $(t,s)=(1,0)$:
$\hat{\rvx}_0 = f_\theta(\rvx_1,1,0,c)$.
Following the multi-step procedure used in consistency models~\cite{song2023consistency}, we also support multi-step $N$
inference by composing $f_\theta$ over intermediate time points: starting from $\rvx_1$, we repeatedly re-noise the
current prediction to the next time level and apply the corresponding solution map. For inference timesteps, we sample
$N$ points uniformly from $[0,1]$ and apply time shifting with $\texttt{shift}=3$. Unless otherwise stated, we use a fixed
guidance scale $w_{\text{inf}}$ during inference. Refer to Algorithm~\ref{alg:inference} for full pseudo-code.

\begin{algorithm}[t]
\caption{Training \modelname}
\label{alg:training}
\begin{algorithmic}[1]
\Require PDG drop probability $p_{\text{drop}}$, batch partition ratios $a, b$.
\Require Model $f_\theta$, guidance scale $w$.
\While{not converged}
    \State Sample clean videos $x_0$ and conditions $c$
    \State Split the data batch for losses by ratios $1-a-b,\; a,\; b$:
    \Statex \hspace{1.5em} $x_0= (x_0^{\text{FM}}, x_0^{\text{SCM}}, x_0^{\text{MVA}}),$ \quad $c= (c^{\text{FM}}, c^{\text{SCM}}, c^{\text{MVA}})$
    \State Sample times $t^{\text{FM}}$, $(t^{\text{SCM}}, s^{\text{SCM}}, \ell^{\text{SCM}})$, and $(t^{\text{MVA}}, s^{\text{MVA}}, \ell^{\text{MVA}})$
    \State Compute $x_t^{\text{FM}}, v_t^{\text{FM}}, x_t^{\text{SCM}}, v_t^{\text{SCM}}, x_t^{\text{MVA}}$ using the Flow Matching framework
    \State Compute $v_{\text{PDG}}^{\text{FM}}$ and $v_{\text{PDG}}^{\text{SCM}}$ using Eq.~\ref{eq:pdg_cfg} and $w$
    \State Randomly replace $v_{\text{PDG}}^{\text{FM}}$ with $v_t^{\text{FM}}$ with a probability of $p_{\text{drop}}$
    \State Replace $c^{\text{FM}}$ with the empty label $\emptyset$ correspondingly
    \State Compute $\mathcal{L}_{\text{FM}}$ via Eq.~\ref{eq:soflow_fm} by replacing $v$ with $v_{\text{PDG}}^{\text{FM}}$
    \State Compute $\mathcal{L}_{\text{SCM}}$ via Eq.~\ref{eq:soflow_scm} by replacing $v$ with $v_{\text{PDG}}^{\text{SCM}}$
    \State Compute $\mathcal{L}_{\text{MVA}}$ via Eq.~\ref{eq:mva}
    \State Aggregate loss: $\mathcal{L} \leftarrow a\mathcal{L}_{\text{FM}} + b\mathcal{L}_{\text{SCM}} + (1-a-b)\mathcal{L}_{\text{MVA}}$
    \State Update $f_\theta$ via gradient descent on $\mathcal{L}$
\EndWhile
\end{algorithmic}
\end{algorithm}
\begin{algorithm}[t]
\caption{Inference of \modelname}
\label{alg:inference}
\begin{algorithmic}[1]
\Require Trained solution map $f_\theta$, condition $c$, guidance scale $w$.
\Require Number of inference steps $N$ (e.g., $N=4$).
\State Sample initial noise $x \sim \mathcal{N}(0, \mathbf{I})$
\State Define a time schedule $1 = t_N > t_{N-1} > \dots > t_0 = 0$
\For{$n = N$ \textbf{down to} $1$}
    \State $t \leftarrow t_n$
    \State $t_{\text{next}} \leftarrow t_{n-1}$
    \State Predict clean data $\hat{x}_0 \leftarrow f_\theta(x, t, 0, c, w)$ \Comment{Direct jump to $t=0$}
    \If{$t_{\text{next}} > 0$}
        \State Sample fresh noise $z \sim \mathcal{N}(0, \mathbf{I})$
        \State $x \leftarrow t_{\text{next}} z + (1 - t_{\text{next}}) \hat{x}_0$ \Comment{Renoise for next jump}
    \Else
        \State $x \leftarrow \hat{x}_0$ \Comment{Final clean output}
    \EndIf
\EndFor
\State \textbf{return} Generated clean video $x$
\end{algorithmic}
\end{algorithm}

\section{Training Procedure}

\subsection{Kinetics}
For class-conditional generation on Kinetics, we train \modelname at $256\times256\times31$ using 8 NVIDIA A100 GPUs.
For the first 80\% of training, we apply an aggressive 75\% token-drop ratio to maximize throughput. During this phase,
we disable the Mean-Velocity Additivity (MVA) regularizer to allow the base solution-flow trajectories to stabilize. We
use a batch-split ratio of $[0.75,\,0.25]$ for the flow-matching and solution-consistency losses. For the remaining 20\%
of training, we disable token dropping (0\% drop) to close the train--inference mismatch and enable MVA to enforce global
long-jump consistency on the stabilized trajectories. We use a constant learning rate of $1\times10^{-4}$ with a global
batch size of 256.

\subsection{Text-to-Video}
To train \modelname for text-to-video generation entirely from scratch, we use a three-stage progressive training
schedule on 128 NVIDIA A100 GPUs. We initialize all weights randomly and do not rely on pre-trained image or video
foundation models. Following prior work~\cite{wan2025wan,chen2025sana,jin2024pyramidal}, we adopt a low-to-high
resolution curriculum to accelerate early convergence and reduce compute. We also perform joint image--video training
throughout, maintaining a 10\% image ratio per batch to strengthen spatial modeling.

\smallskip
\noindent{\textbf{1) Low-resolution training with token dropping.}}
We first train at 256p with 81 frames to learn basic spatiotemporal dynamics and text alignment. We apply 75\% token
dropping for efficiency, and interleave dense full-token updates at a 9:1 iteration ratio (one dense update every nine
sparse updates) to mitigate overfitting to sparse representations. We train for 150K iterations with learning rate
$1\times10^{-4}$ and global batch size 2048. MVA is disabled in this stage.

\smallskip
\noindent{\textbf{2) High-resolution training with token dropping.}}
We then increase the spatial resolution to 480p while retaining 75\% token dropping. We train for an additional 50K
iterations with learning rate $1\times10^{-4}$ and global batch size 1024. We keep MVA disabled while the model adapts to
the longer sequences and higher-frequency details.

\smallskip
\noindent{\textbf{3) High-resolution full-token annealing.}}
Finally, we disable token dropping (0\% drop) and fine-tune on dense 480p tokens to eliminate residual train--inference
mismatch. We train for 50K iterations with a reduced learning rate of $2\times10^{-5}$ and global batch size 1024.
Crucially, we enable MVA in this stage to enforce global long-jump consistency on the stabilized high-resolution
trajectories, improving fidelity for few-step inference.

\begin{figure}[t]
    \centering
    \includegraphics[width=0.8\linewidth]{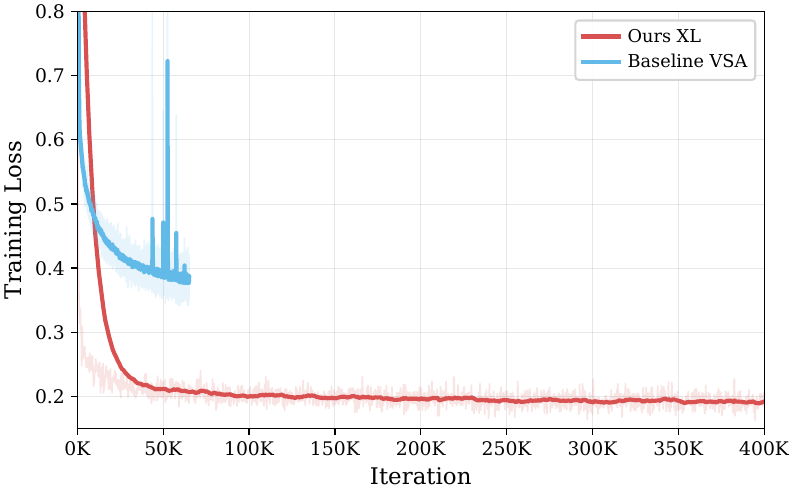}
    \caption{\textbf{Training loss over 400K iterations.} Our \modelname (red) converges rapidly and remains stable throughout training. In contrast, the Video Sparse Attention (VSA) baseline (blue) exhibits pronounced optimization instability, with frequent loss spikes early in training and reduced convergence reliability under the same setup. These results suggest that VSA is more sensitive to training configurations, whereas \modelname trains robustly without specialized stabilization or hyperparameter tuning.}
    \label{fig:vsa}
\end{figure}

\begin{figure}[t]
    \centering
    \includegraphics[width=\linewidth]{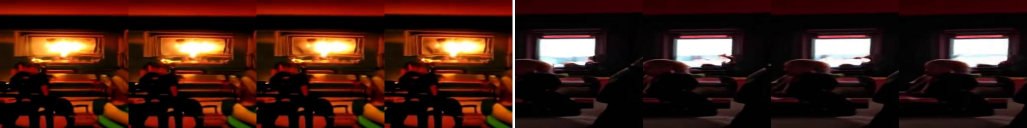} \smallskip
    \includegraphics[width=\linewidth]{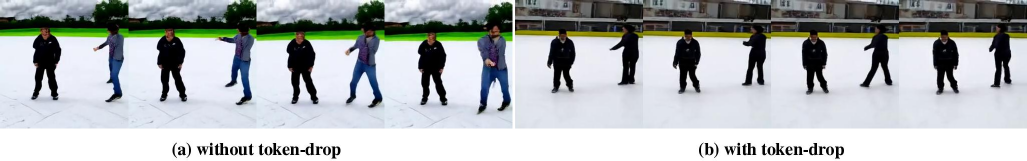}
    \caption{\textbf{Visual ablation of token-drop training.} We compare models trained (a) entirely on dense grids without token dropping, and (b) utilizing our aggressive 75\% token-dropping framework. Despite the massive reduction in training compute, the model trained with token dropping rigorously preserves spatial fidelity, fine-grained details, and overall temporal consistency, demonstrating that our sparse training recipe does not degrade generative quality.}
    \label{fig:token-drop}
\end{figure}

\begin{figure}[t]
    \centering
    \includegraphics[width=\linewidth]{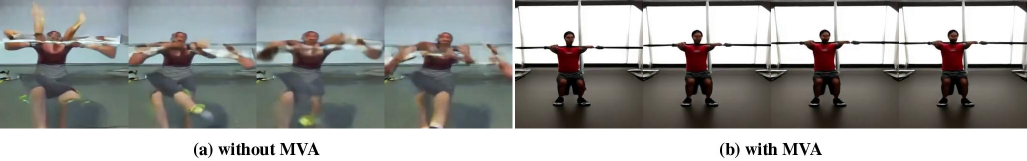}
    \caption{\textbf{Visual ablation of Mean-Velocity Additivity (MVA).} Both examples use 4-step sampling. (a) Without MVA, integration errors accumulate across steps, leading to distortion, motion blur, and structural collapse. (b) With MVA, enforcing global semigroup consistency improves sharp structure, consistent subject identity and motion.}
    \label{fig:mva}
\end{figure}

\begin{table}[t]
\centering
\caption{\textbf{VBench results for 480p text-to-video generation.} All baselines are re-evaluated using the Diffusers library with our augmented prompts for consistent comparison. $^\star$ denotes models specifically fine-tuned on Wan-14B-generated images. $^\dagger$ denotes unofficial checkpoints due to restricted access to the official
open-source release.}
\setlength{\tabcolsep}{8pt}
\renewcommand{\arraystretch}{1.15}

\resizebox{\textwidth}{!}{%
\begin{tabular}{l c c c c c c c}
\toprule
\multirow{2}{*}{\textbf{Methods}} &
\multirow{2}{*}{\textbf{NFE}} &
\multirow{2}{*}{\textbf{Latency} (s)} &
\multirow{2}{*}{\textbf{Speedup}} &
\multirow{2}{*}{\textbf{\#Params} (B)} &
\multicolumn{3}{c}{\textbf{Evaluation scores} $\uparrow$} \\
\cmidrule(lr){6-8}
& & & & & \textbf{Total} & \textbf{Quality} & \textbf{Semantic} \\
\midrule

\rowcolor{gray!15}
\multicolumn{8}{l}{\textit{Multi-step DiT}} \\
% Wan2.2    & 40$\times$2 & -- & -- & 14B &   &  &  \\
Wan2.1    & 40$\times$2 & 732 & 1.0$\times$ & 14B &  \textbf{82.34} &  \textbf{83.77} &  \textbf{76.65} \\
Wan2.1    & 40$\times$2 & 147 & 4.97$\times$ & 1.3B & 82.11 & 83.77 & 75.46 \\

% \rowcolor{gray!15} \multicolumn{8}{l}{\textit{Multi-step Efficient DiT}} \\
SANA-Video & 40$\times$2 & {79.03} & 9.26$\times$ & {2B} &
{81.93} & {84.11} & {73.23} \\
%Adaptor   & -- & \textbf{60} & \textbf{8.0$\times$} & \textbf{2B} & \underline{83.71} & \underline{84.35} & \textbf{81.35} \\
% STA       & 50$\times$2 & {--} & {--} & {1.3B} &
% {--} & {--} & {--} \\

\rowcolor{gray!15}
\multicolumn{8}{l}{\textit{Few-step DiT with teacher-distillation}} \\
Wan2.1 + VSA$^\star$       & 4 & {11.28} & 64.9$\times$ & {1.3B} &
{81.28} & {83.36} & {72.97} \\
Wan2.1 + rCM$^\dagger$  & 4 & {12.61} & 58.0$\times$ & {1.3B} &
 79.36 & 83.75 & 61.77 \\

\midrule
\rowcolor{gray!15}
\multicolumn{8}{l}{\textit{Few-step Efficient DiT from-scratch}} \\
\textbf{Ours} & 4 & \textbf{10.09} & \textbf{72.5$\times$} & {1.5B} &
{79.69} & {83.53} & {64.34} \\
\bottomrule
\end{tabular}%
} % end resizebox
\label{tab:vbench}
\end{table}

\section{Additional Results}

\subsection{Comparison to Video Sparse Attention}
Video Sparse Attention (VSA)~\cite{zhang2025vsa} proposes an adaptive-sparse attention mechanism designed to be trainable from scratch. To rigorously compare our GLGA backbone against VSA in a controlled setting, we train a VSA-equipped model on the Kinetics-700 dataset using the exact same hyperparameters and compute budget as our \modelname-XL in Tab.~\ref{tab:hyperparameter}. Following the VSA paper, we use 64 blocks ($B{=}64$) and select 32 tiles for sparse attention.
Fig.~\ref{fig:vsa} shows that VSA is highly unstable under this standard setup: the VSA baseline (blue) exhibits large loss spikes early in training and fails to converge reliably. We attribute this behavior to the brittleness of \emph{adaptive} sparsity during early optimization. Because VSA learns data-dependent routing to determine its sparsity patterns, the routing decisions are poorly calibrated when representations are still immature, which can lead to unstable training. In contrast, our \modelname (red) converges quickly and remains stable without special schedules or optimized hyperparameters.

\subsection{Token-Drop Training}
In Fig.~\ref{fig:token-drop}, we visually ablate token-drop training by comparing (a) a model trained entirely on dense
grids (no token dropping) and (b) a model trained with our aggressive 75\% token-dropping framework for the first 80\% of
training, followed by 20\% full-token training. Despite the substantial reduction in training compute, the token-dropped
model preserves spatial fidelity, fine-grained detail, and overall temporal consistency. The resulting samples remain
crisp and structurally coherent, with stable motion over time, indicating that our sparse training recipe does not
compromise generative quality relative to dense training.

\subsection{Mean-Velocity Additivity}
Fig.~\ref{fig:mva} provides a visual ablation of Mean-Velocity Additivity (MVA) regularization for few-step
generation. We compare videos generated in 4 steps using models trained with and without MVA. Without MVA, the
large inference step size $\Delta$ amplifies the accumulation of local integration errors (the semigroup defect; see
Corollary~\ref{cor:quadratic_shrink}), yielding substantial trajectory drift. Qualitatively, this appears as structural
collapse, anatomical distortions, and pronounced motion blur. By explicitly enforcing global path consistency during training, MVA suppresses these large $O(\Delta^2)$ integration errors. As a result, the MVA-regularized model preserves sharper structure, more coherent subject identity, and stable motion kinematics under 4-step sampling.

\subsection{Text-to-Video VBench Evaluation}
\label{sec:vbench_results}

Tab.~\ref{tab:vbench} presents the VBench evaluation results for 480p text-to-video generation. Our 1.5B model achieves a total score of 79.69, operating at a state-of-the-art latency of just 10.09 seconds (a 72.5$\times$ speedup over the Wan2.1 14B baseline). While our absolute semantic scores of 64.34 are lower than the leading multi-step baselines (Wan 2.1 and Sana-Video), it is critical to contextualize these metrics within the different training paradigms. 

\smallskip
\noindent \textbf{Discrepancies in initialization and training compute.}
The baselines in Tab.~\ref{tab:vbench} benefit from large computational advantages and strong pre-trained priors.
Distillation-based methods (Wan2.1+VSA and Wan2.1+rCM) do not learn the generative dynamics from scratch; instead, they
inherit the capabilities of a fully trained foundation teacher. Likewise, Sana-Video leverages heavy text-to-image
pretraining before video fine-tuning. In contrast, our method targets the \emph{few-step efficient DiT-from-scratch}
regime: we initialize the network randomly (without pre-trained image or video weights) and train with a fraction of the compute used by industrial-scale baselines like Wan 2.1.

\smallskip
\noindent \textbf{Algorithmic efficacy in controlled settings.}
To isolate and prove the fundamental algorithmic superiority of our approach, we emphasize our controlled comparisons on Kinetics-700 (Fig.~\ref{fig:qual_figure} and Tab.~\ref{tab:fvd_comparison}). When initialization and compute
are matched, our PDG-SFM framework with GLGA consistently outperforms standard baselines, demonstrating improved few-step generation fidelity without relying on large-scale pretraining or teacher distillation.                                                                                                                                                                           
                                                                                                                                                                                                                                                                                                                                                                                                                                                                                                           
\clearpage
\section{Qualitative results}
In Figs.~\ref{fig:qual_figure1}, \ref{fig:qual_figure2}, and \ref{fig:qual_figure3}, we present generation results of our \modelname trained on Kinetics.
In Figs.~\ref{fig:qual_figure4}, \ref{fig:qual_figure5}, and \ref{fig:qual_figure6}, we present text-to-video generation results.

\begin{figure}[h]
    \centering
    \includegraphics[width=\linewidth]{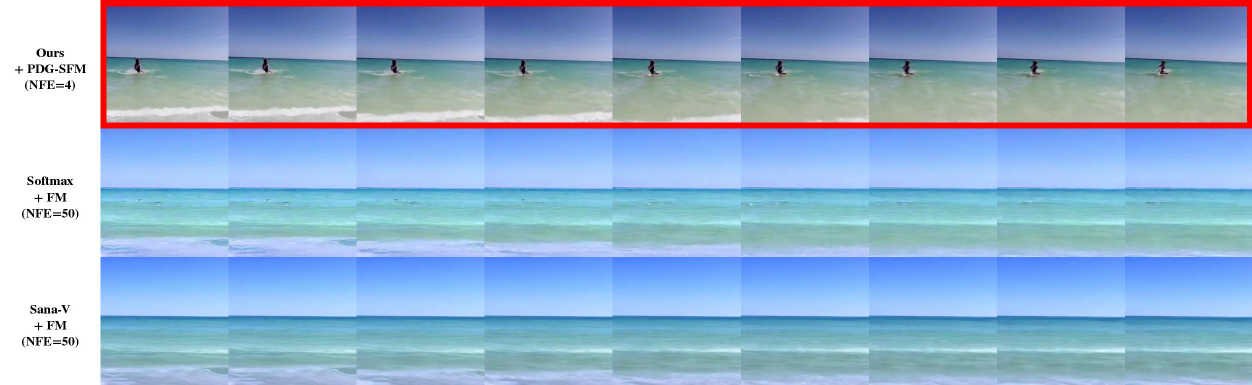} \par\medskip
    \includegraphics[width=\linewidth]{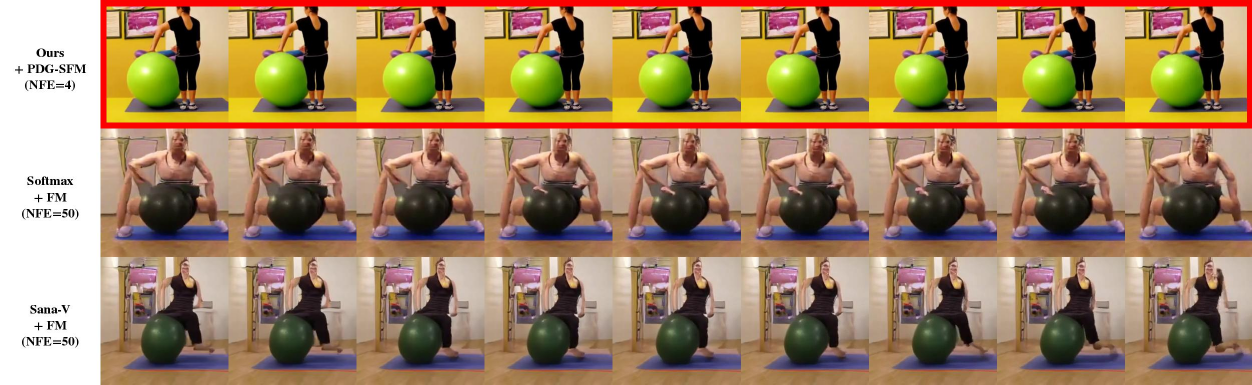} \par\medskip
    \includegraphics[width=\linewidth]{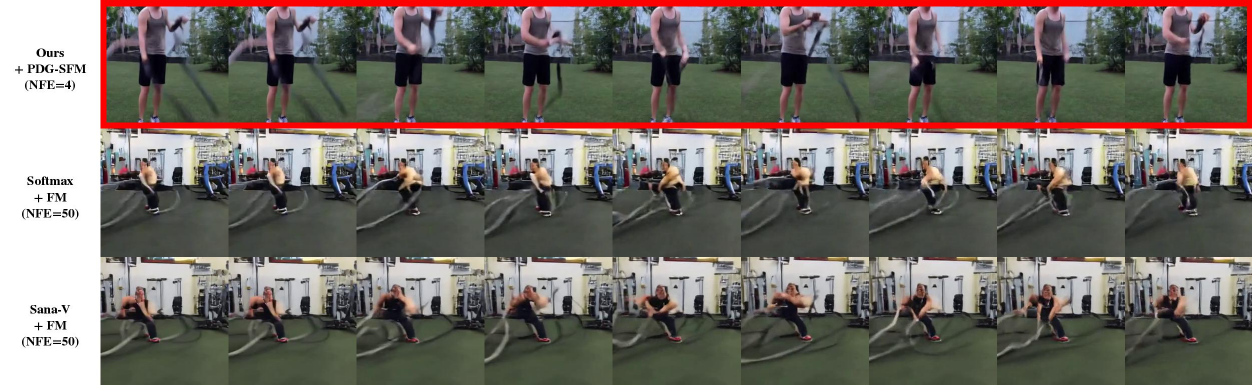}
    \caption{\textbf{Qualitative comparison on Kinetics-700} on 400K training iterations.}
    \label{fig:qual_figure1}
\end{figure}

\begin{figure}[h]
    \centering
    \includegraphics[width=\linewidth]{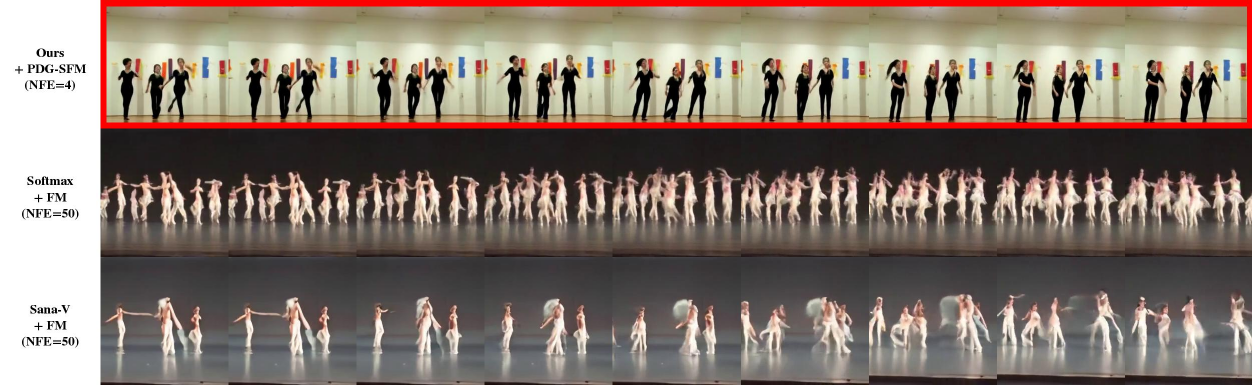} \par\medskip
    \includegraphics[width=\linewidth]{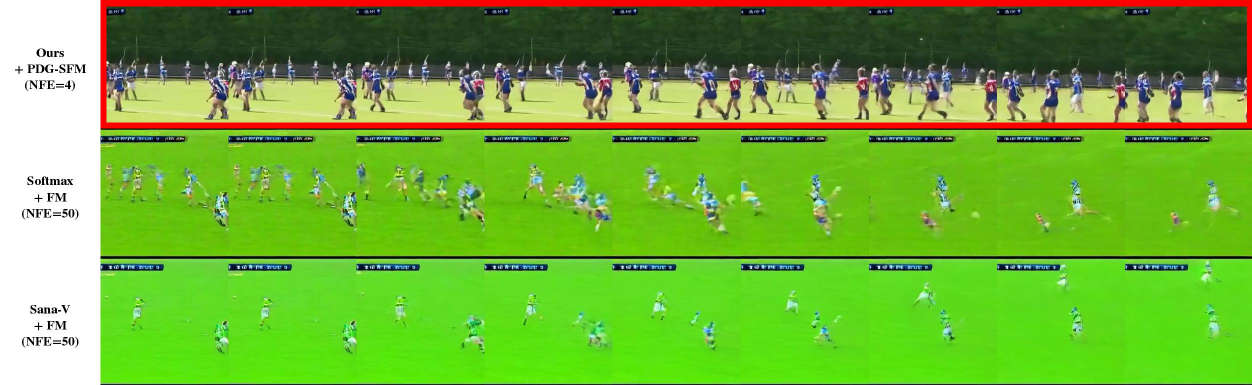} \par\medskip
    \includegraphics[width=\linewidth]{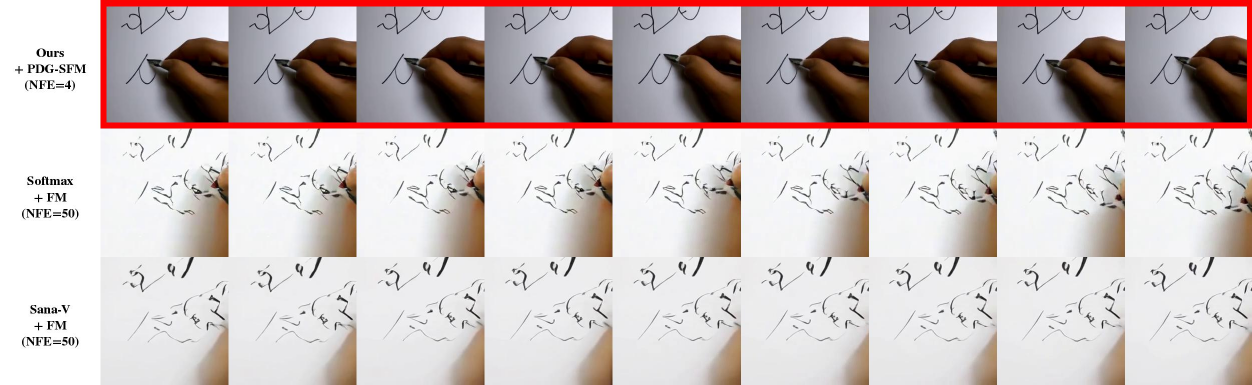}
    \caption{\textbf{Qualitative comparison on Kinetics-700} on 400K training iterations.}
    \label{fig:qual_figure2}
\end{figure}

\begin{figure}[h]
    \centering
    \includegraphics[width=\linewidth]{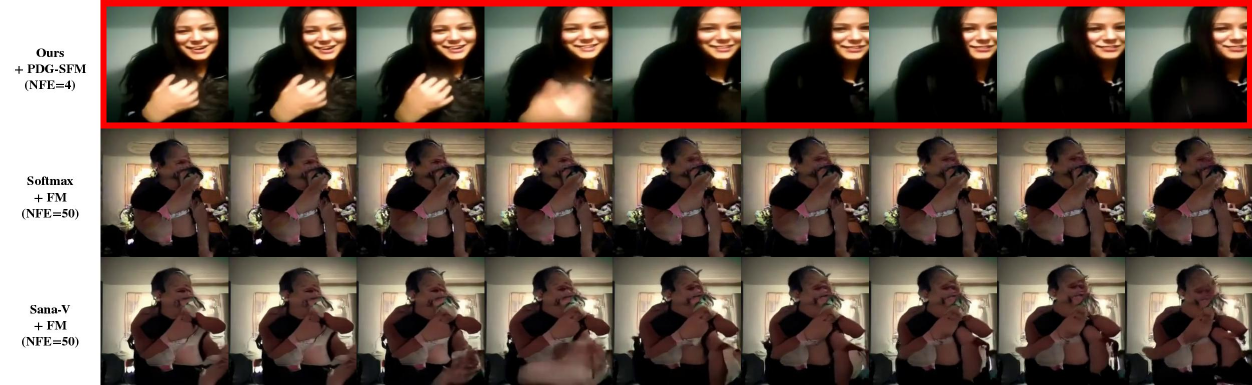} \par\medskip
    \includegraphics[width=\linewidth]{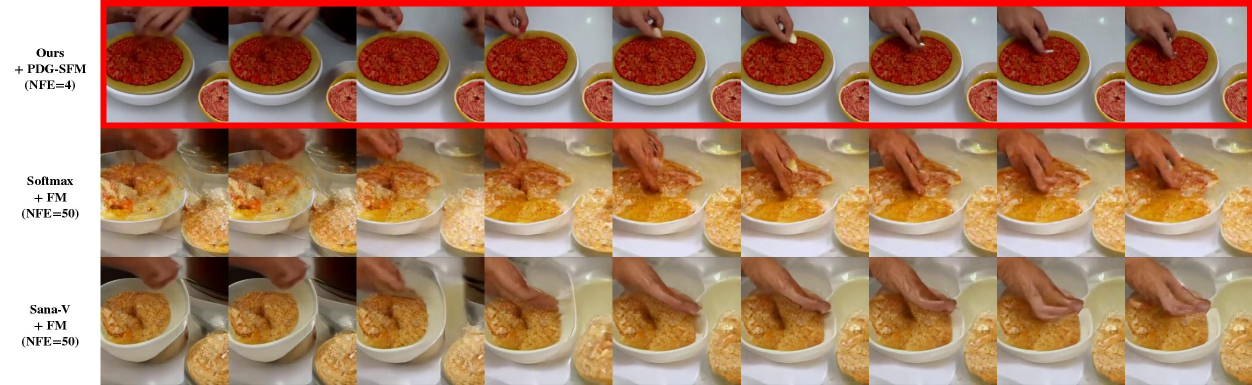} \par\medskip
    \includegraphics[width=\linewidth]{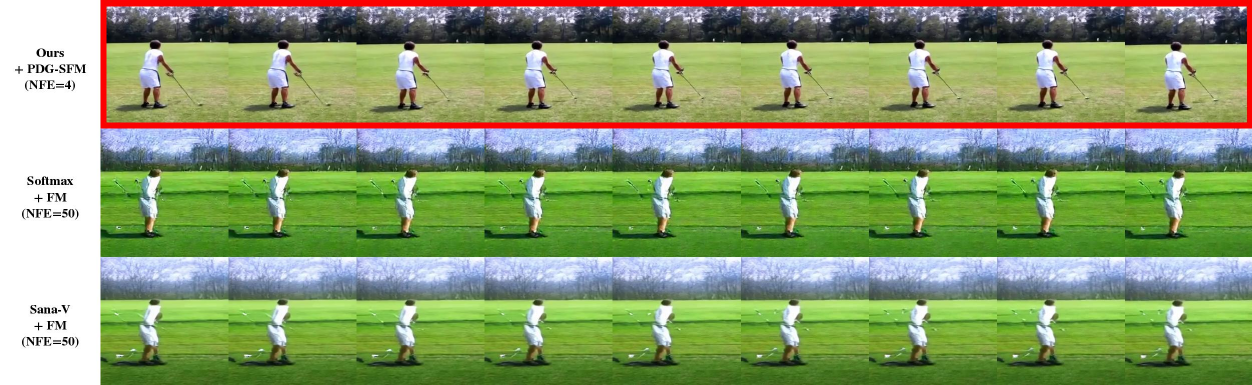}
    \caption{\textbf{Qualitative comparison on Kinetics-700} on 400K training iterations.}
    \label{fig:qual_figure3}
\end{figure}

\begin{figure}[h]
    \centering
    \includegraphics[width=\linewidth]{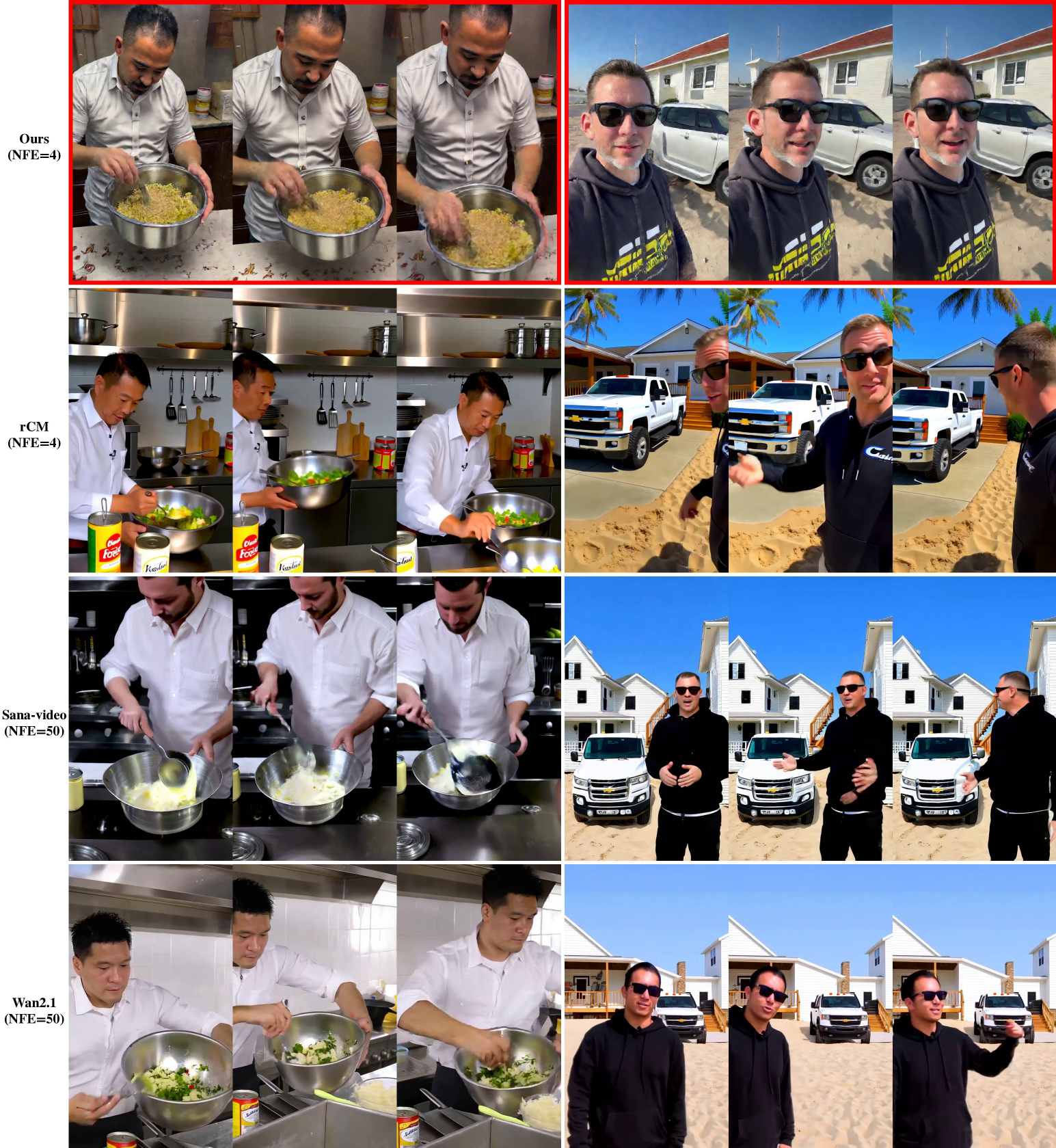}
    \caption{\textbf{Qualitative comparison on 480p text-to-video}. Left column prompt: \emph{In the video, a man is seen in a kitchen, preparing food. He is wearing a white shirt and is focused on his task. The man is holding a large metal bowl filled with what appears to be a mixture of ingredients, possibly a salad or a mixture for a dish. He is using a spoon to mix the contents of the bowl, ensuring that everything is well combined.} Right column prompt: \emph{In the video, a man is standing on a sandy beach, wearing a black hoodie and sunglasses. He is speaking to the camera, gesturing towards a white Chevrolet truck parked in front of a house. The truck is parked on a concrete driveway, and the house has a white exterior with a wooden deck and stairs leading up to the front door. The sky is clear and blue, indicating a sunny day. The man appears to be enjoying his time at the beach, and the truck is parked in a way that suggests it is ready for a road trip.}}
    \label{fig:qual_figure4}
\end{figure}

\begin{figure}[h]
    \centering
    \includegraphics[width=\linewidth]{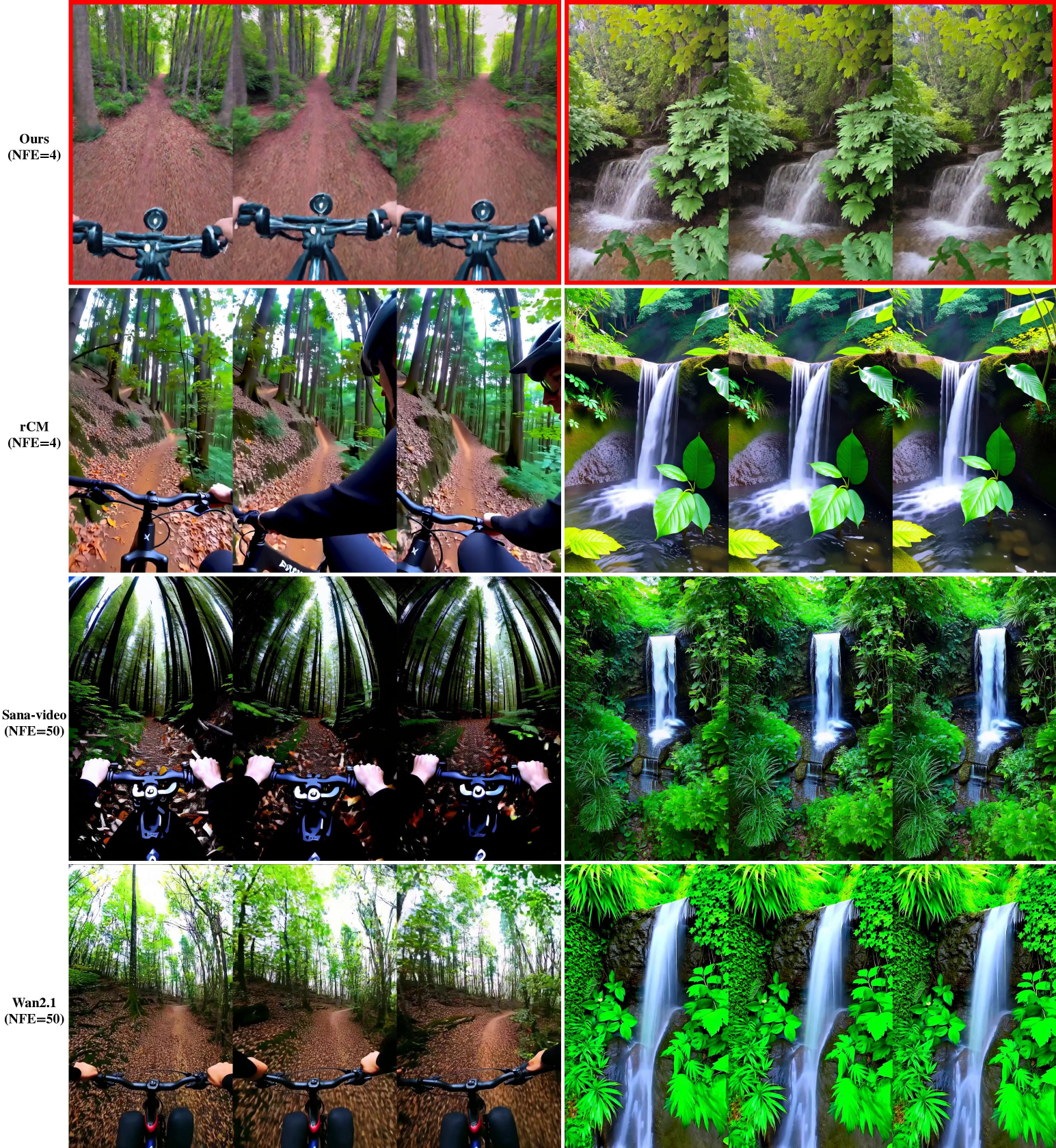}
    \caption{\textbf{Qualitative comparison on 480p text-to-video}. Left column prompt: \emph{"The video captures a thrilling mountain biking experience in a dense forest. The perspective is from the rider's point of view, providing an immersive experience of the journey. The rider is seen maneuvering through a narrow, winding trail covered with fallen leaves and twigs, indicating a fall season setting. The trail is surrounded by tall trees with green foliage, creating a canopy overhead. The rider's hands are firmly gripping the handlebars, guiding the bike through the uneven terrain."} Right column prompt: \emph{"The video captures a serene and picturesque scene of a small waterfall nestled within a lush, green forest. The waterfall cascades down a rocky ledge, surrounded by a variety of vibrant green plants and foliage. The water flows smoothly, creating a gentle, soothing sound that adds to the tranquil atmosphere. The surrounding vegetation includes large, broad leaves and smaller, delicate plants, creating a dense and verdant environment."}}
    \label{fig:qual_figure5}
\end{figure}

\begin{figure}[h]
    \centering
    \includegraphics[width=\linewidth]{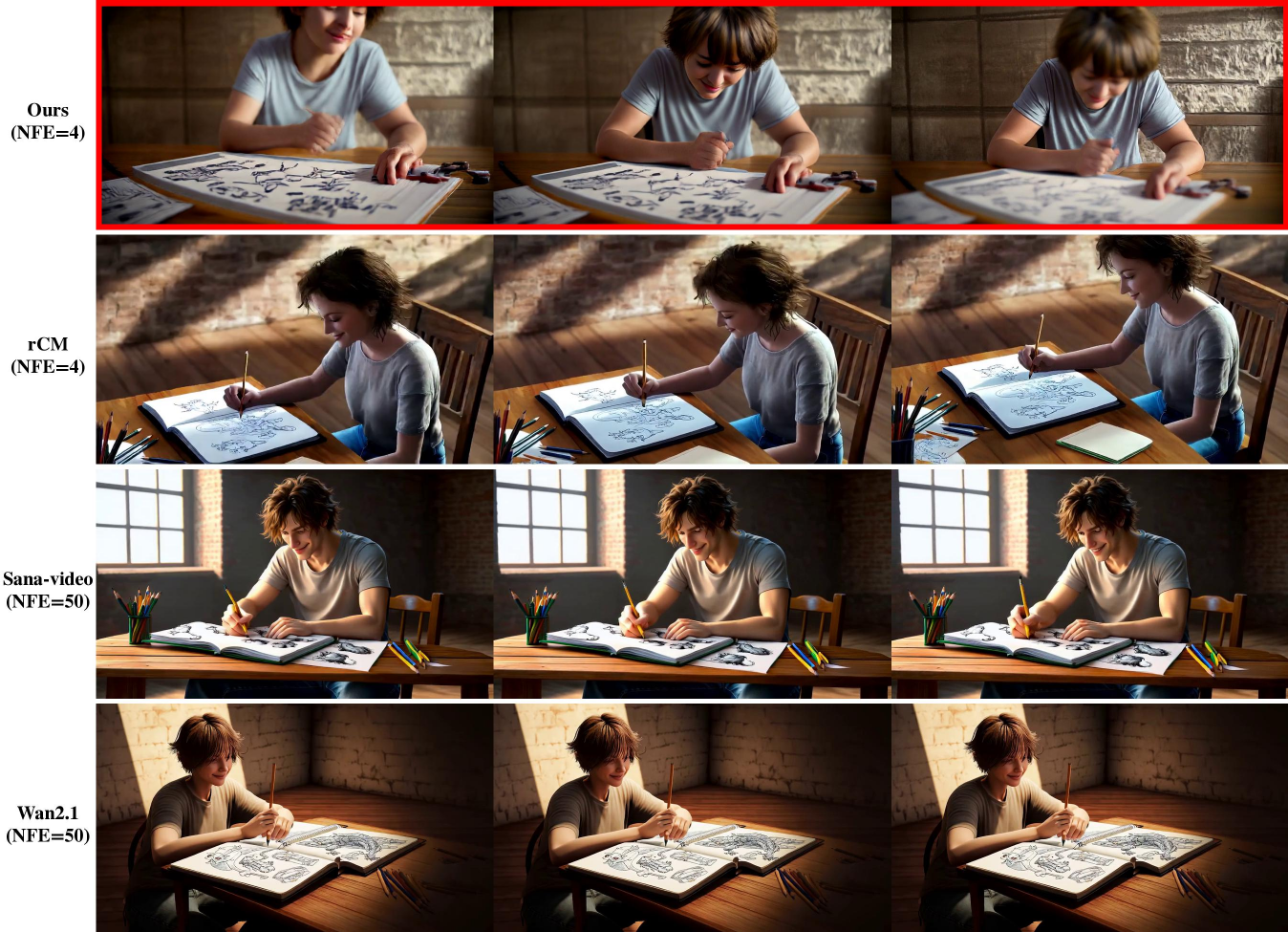}
    \caption{\textbf{Qualitative comparison on 480p text-to-video}. Prompt: \emph{CG game concept digital art, a person sitting at a wooden table, focused intently on a sketchbook. They are wearing comfortable casual clothes, such as a plain t-shirt and jeans. The person has short, messy brown hair and a gentle smile, holding a pencil in their hand. The sketchbook contains detailed drawings of various creatures, including dragons, mythical beasts, and futuristic machines. Soft lighting casts shadows on the walls behind them, creating a cozy atmosphere. The person is drawing with a calm and deliberate pace, occasionally pausing to consult reference materials.}}
    \label{fig:qual_figure6}
\end{figure}

\clearpage

\section{Discussion on Baselines}
\label{app:baselines_discussion}

In this section, we summarize representative baseline architectures that aim to mitigate the computational bottleneck in video diffusion transformers.

\smallskip
\noindent \textbf{Sana-Video}~\cite{chen2025sana} accelerates video diffusion by replacing softmax attention with linear
attention and augmenting locality/temporal aggregation via convolutional mixing (e.g., causal Mix-FFN with temporal
convolutions), reducing attention complexity from quadratic to linear. However, linear attention can trade off
expressiveness, particularly for fine-grained spatial details that are naturally captured by exact softmax attention.
Moreover, Sana-Video’s locality mechanism relies on convolutional mixing, which implicitly assumes a dense neighborhood
layout and can become brittle under random token dropping. In contrast, our GLGA retains a locally expressive
\emph{attention} branch (sliding-window softmax) and is explicitly designed to remain stable under aggressive
\emph{random token dropping}.

\smallskip
\noindent \textbf{Sparse Linear Attention (SLA)}~\cite{zhang2025sla} proposes a hybrid attention mechanism that combines sparse computation with linear/low-rank acceleration by partitioning attention weights into different regimes (e.g., critical, marginal, negligible) and leveraging efficient fused GPU kernels, often with light fine-tuning on large DiTs. A potential limitation is that SLA depends on data-dependent routing and learned attention patterns to determine which interactions to compute sparsely; in early from-scratch training, these patterns can be poorly calibrated, making optimization more brittle in practice (as demonstrated in Fig.~\ref{fig:performance_comparison}).

\smallskip
\noindent \textbf{Video Sparse Attention (VSA)}~\cite{zhang2025vsa} replaces full 3D attention in video DiTs with a trainable sparse-attention pipeline: a coarse stage pools tokens into tiles to adaptively identify salient regions, followed by a fine stage that computes attention only within selected tiles using a hardware-friendly block layout.
A key challenge is that this adaptive selection relies on feature quality; early in training, when representations are still immature, the selection patterns can be poorly calibrated and destabilize optimization. Consistent with this, we observe pronounced training instabilities under standard from-scratch settings (Fig.~\ref{fig:vsa}), which can limit the simplicity and robustness of VSA-style sparsification for from-scratch training. 

\smallskip
\noindent \textbf{NATTEN (Neighborhood Attention Extension)} \cite{hassani2023neighborhood} localizes self-attention to a fixed window, yielding sub-quadratic time/space complexity with a strong local inductive bias, and is supported by custom CUDA kernels in the NATTEN library. This is closely related to our local sliding-window branch; however, window attention alone lacks an explicit global aggregation pathway and typically relies on depth and/or dilation to expand receptive fields. In contrast, GLGA complements windowed softmax attention with a global linear branch and input-aware routing, and our overall system targets both per-step efficiency and few-step scalability.

\smallskip
\noindent \textbf{Native Sparse Attention (NSA)}~\cite{yuan2025native} replaces full softmax attention with a query-dependent sparse mechanism that is trainable end-to-end. NSA constructs three complementary KV sets:
(i) compression tokens that aggregate contiguous KV blocks into coarse representations,
(ii) selection tokens that preserve detail by selecting a small number of contiguous high-importance blocks
(top-$n$), with importance estimated efficiently from compression-level attention scores, and
(iii) a sliding-window branch for local context. These three outputs are subsequently fused via learned gating mechanisms. 
While NSA can achieve high sparsity at inference, integrating it with an aggressive \emph{random token-dropping} training regime is non-trivial. In particular, NSA’s routing depends on forming and scoring \emph{contiguous} KV blocks, whereas dropping 75\% of tokens disrupts this contiguity and can destabilize data-dependent block selection. In contrast, GLGA
does not rely on fragile routing over contiguous blocks. Instead, it maintains explicit, static pathways—a global aggregation branch (linear attention) and a locally expressive branch (windowed softmax)—that remain well-defined and robust even under heavy random token dropping.

\smallskip
\noindent \textbf{S$^2$DiT} \cite{zhao2026s2ditsandwichdiffusiontransformer} targets efficient streaming video generation on mobile by combining efficient attention variants (e.g., LinConv-style hybrid attention and strided self-attention) with a distillation pipeline that transfers capability from a large teacher to a compact few-step student. As a result, S$^2$DiT is optimized for resource-constrained edge settings, where stride-based feature compression and distillation are central to achieving real-time throughput. In contrast, our goal is efficient \emph{from-scratch} training and few-step generation at video scale: rather than permanently compressing the architecture, we leverage compute-efficient training mechanisms (e.g., token dropping and MVA) together with a droppable-by-design attention block, enabling a large and expressive DiT while still reducing training and inference cost.

\smallskip
\noindent{\textbf{Distribution Matching Distillation (DMD)}} \cite{dmd,dmd2,zheng2025large} distills a pre-trained multi-step diffusion model into a one-step generator by matching the student’s output
distribution to the teacher’s distribution, using score-function differences that require additional diffusion models
trained on real versus generated samples. While DMD can yield extremely fast samplers, it is inherently teacher-based and
introduces extra training stages and auxiliary models, which can be particularly costly at video scale. In contrast, we
adopt SoFlow’s teacher-free solution-flow objective and focus on reducing its practical overhead—most notably the extra
SCM and guidance forward passes—via Path-Drop Guided (PDG) training and a droppable-by-design backbone.

\section{Limitations and Future Work}
While \modelname substantially accelerates both training and inference for high-fidelity video generation, several
limitations remain. 
First, although we scale to 480p with a 1.5B-parameter model, industry-scale systems often target
720p/1080p with $10$B$+$ parameters (e.g., Wan2.1-14B or Wan2.2-14B). The behavior of \modelname at these extreme scales and ultra-high resolutions is not yet characterized. Second, training entirely from scratch highlights the raw efficiency of our approach, but inevitably lacks the strong priors (``world knowledge'') provided by large text-to-image foundation models. Consequently, our absolute semantic scores on comprehensive automated benchmarks, such as VBench, currently trail behind those of industrially trained baselines that inherit these broad, pre-trained distributions.

These limitations suggest several directions for future work. A primary goal is to scale \modelname to the 10B+ regime and 1080p resolution to stress-test GLGA and our training recipe under truly large-scale settings. Furthermore, closing the semantic performance gap on open-ended benchmarks like VBench could be addressed by training longer compute or adapting our sparse training framework to support the highly efficient fine-tuning of pre-trained dense models. Finally, while our PDG-SFM training with the MVA regularizer enables stable 4-step generation, extending the framework toward reliably stable 1-step or 2-step sampling remains an important frontier for real-time video synthesis.

\end{document}